\documentclass{article}
\usepackage{arxiv}

\usepackage[round]{natbib}

\usepackage[utf8]{inputenc} 
\usepackage[T1]{fontenc}    
\usepackage{hyperref}       
\usepackage{url}            
\usepackage{booktabs}       
\usepackage{amsfonts}       
\usepackage{nicefrac}       
\usepackage{microtype}      
\usepackage{amsmath,amsthm,amssymb}
\usepackage{mathrsfs}  
\usepackage{bm}
\usepackage{color}
\usepackage{subfigure}
\usepackage{dsfont}
\usepackage{blkarray, bigstrut}
\usepackage{graphicx}
\usepackage{tikz}
\usepackage{makecell}
\usepackage{wrapfig}
\usepackage{enumitem}
\usepackage{booktabs}
\usepackage{algorithm}
\usepackage{algorithmic}
\usepackage{bbm}
\usepackage{mdframed}
\usepackage{thmtools}
\usepackage{thm-restate}

\usepackage{parskip}

\setlength{\parskip}{0.5\baselineskip plus 1pt}

\setlist[enumerate]{itemsep=0.5pt, wide=\parindent}
\setlist[itemize]{itemsep=0.5pt, wide=\parindent}

\newcommand*\diff{\mathop{}\!\mathrm{d}}

\DeclareMathOperator{\err}{\sf err}

\DeclareMathOperator{\sign}{\sf sign}

\definecolor{Blue}{rgb}{0.0,0.0,1.0}

\setlength{\parindent}{0pt}

\declaretheoremstyle[
    bodyfont=\normalfont\itshape,
    spacebelow=\parsep,
    spaceabove=\parsep,
          ]{examplestyle}

\providecommand{\qed}{\hfill $\blacksquare$}

\DeclareMathOperator{\diag}{\sf diag}

\DeclareMathOperator{\argmin}{\mathrm{arg}\min}

\DeclareMathOperator{\supp}{\sf supp}

\DeclareMathOperator{\ball}{\sf Ball}

\DeclareMathOperator{\tv}{\sf TV}

\DeclareMathOperator{\ot}{\sf OT}

\DeclareMathOperator{\proj}{\sf proj}

\providecommand{\inserttitle}{Classifier-independent Lower-Bounds for
  Adversarial Robustness}

 \title{\inserttitle}

\author{Elvis Dohmatob
  \email{e.dohmatob@criteo.com}\\\addr{Criteo AI Lab}
  }
 \date{\today}

\begin{document} 
 \maketitle

\begin{abstract}
We theoretically analyse the limits of robustness to test-time adversarial and
noisy examples in classification. Our work focuses on deriving bounds which
uniformly apply to all classifiers (i.e all measurable functions from features
to labels) for a given problem. Our contributions are two-fold. (1) We use optimal transport theory to
derive variational formulae for the Bayes-optimal error a classifier can make on
a given classification problem, subject to adversarial attacks. The optimal
adversarial attack is then an optimal transport plan for a certain binary
cost-function induced by the specific attack model, and can be computed via a
simple algorithm based on maximal matching on bipartite graphs. (2) We derive
explicit lower-bounds on the Bayes-optimal error in the case of the popular
distance-based attacks. These bounds are universal in the sense that they depend
on the geometry of the class-conditional distributions of the data, but not on a
particular classifier. Our results are in sharp contrast with the existing
literature, wherein adversarial vulnerability of classifiers is derived as a
consequence of nonzero ordinary test error.
\end{abstract}


\section{Introduction}
\label{sec:intro}
\subsection{Context}
Despite their popularization, machine-learning powered systems
(assisted-driving, natural language processing, facial recognition, etc.) are
not likely to be deployed for critical tasks which require a stringent error margin, in a
closed-loop regime any time soon. One of the main blockers which has been
identified by practitioners and ML researchers alike is the phenomenon of
\emph{adversarial examples}~\cite{szegedy2013intriguing}. There is now an arms
race~\cite{athalye18} between adversarial attack developers and defenders, and
there is some speculation that adversarial examples in machine-learning might
simply be inevitable.

In a nutshell an adversarial (evasion) attack operates as follows. A classifier is trained
and deployed (e.g the road traffic sign recognition sub-system on an AI-assisted
car). At test / inference time, an attacker (aka adversary) may submit queries
to the classifier by sampling a data point $x$ with true label $y$, and
modifying it $x \rightarrow x^{\text{adv}}$ according to a prescribed threat
model. For example, modifying a few pixels on a road traffic sign
~\cite{onepixel}, modifying intensity of pixels by a limited amount determined
by a prescribed tolerance level 
variance per pixel, etc. 
  The goal of the attacker is to fool the classifier into classifying
  $x^{\text{adv}}$ with a label different from $y$.
A robust classifier tries to limit this failure mode, for a prescribed
  attack model.

In this manuscript, we establish universal lower-bounds
on the test error any classifier can attain under
adversarial attacks. 

 Before announcing our contributions, we present
an overview a representative sample of this literature which is relevant to our
own contributions.
\subsection{Overview of related works}
Questions around adversarial examples and fundamental limits of defense
mechanisms, are an active area of research in machine-learning, with a large
body of scientific literature.
\label{sec:related-work}
\paragraph{Classifier-dependent lower-bounds.}
There is now a rich array of works which study adversarial examples as a natural
consequence of nonzero test error. In particular, let us mention
~\cite{tsipras18},
~\cite{schmidt2018},~\cite{goldstein},~\cite{gilmerspheres18},~\cite{saeed2018},~\cite{dohmatob19}.
These all use a form of the \emph{Gaussian isoperimetric
inequality}~\cite{boucheron2013}: in these theories, adversarial examples exist
as a consequence of ordinary test-error in high-dimensional problems with
concentrated class-conditional distributions. On such problems, for a classifier
which does not attain 100\% on clean test examples (which is likely to be the
case in practice), every test example will be close to a misclassified example,
i.e can be misclassified by adding a small perturbation. Still using Gaussian
isoperimetry, ~\cite{gilmernoise} has studied the relationship between
robustness to adversaries and robustness to random noise. The authors argued
that adversarial examples are a natural consequence of errors made by a
classifier on noisy images. 

One should also mention some works which exploit curvature of the decision
boundary of neural networks to exhibit the existence of vectors in
low-dimensional subspaces, which when added to every example in a target class,
can fool a classifier on a fraction of the samples~\cite{moosavi17,unipert}.
\paragraph{Universal / classifier-independent bounds.}
To our knowledge, ~\cite{fawzi18,saeed2018,bhagoji2019} are the only works to derive universal /
classifier-independent lower-bounds for adversarial robustness. Particularly,
~\cite{bhagoji2019} and ~\cite{pydi2019}, are the most related to ours. In
~\cite{bhagoji2019}, the authors
considered general adversarial attacks (i.e beyond distance-based models of
attack), and show that Bayes-optimal error for the resulting classification
problem under such adversaries is linked to a certain transport distance between
the class-conditional distributions (see our Theorem \ref{thm:main} for a
generalization of the result). This result is singularly different from the
previous literature as it applies even to classifiers which have zero test-error
in the normal / non-adversarial sense. Thus, there adversarial examples that
exist solely as a consequence of the geometry of the problem. The results in
section \ref{sec:variational} of our paper are strict extension of the
bounds in ~\cite{bhagoji2019}. The main idea in ~\cite{bhagoji2019,pydi2019} is to
construct an optimal-transport metric (w.r.t a certain binary cost-function
induced by the attack), and then use Kantorovich-Rubenstein duality to relate
this metric to the infimal adversarial error a classifier can attain under the
adversarial attack.
\ref{sec:variational}.

Finally, one should mention ~\cite{cranko19} which studies vulnerability of
hypothesis classes in connection to loss functions used. 

\subsection{Summary of our main contributions}
Our main contributions can be summarized as
follows.
\begin{itemize}
    \item 
    In section \ref{sec:variational}
      (after developing some background material in section
      \ref{sec:background}),
      we use optimal transport theory to derive variational formulae for the
      Bayes-optimal error (aka smallest possible test error) of a classifier
      under adversarial attack, as a function of the "budget" of the attacker,
        These formulae suggest that instead of doing adversarial
      training, practitioners should rather do normal training on adversarially
      augmented data. Incidentally, this is a well-known trick to boost up the
      adversarial robustness of classifiers to known attacks, and is usually
      used in practice
      under the umbrella name of "adversarial data-augmentation". See
      ~\cite{yao2020}, for example. In our manuscript, this principle appears as
      a natural consequence of our variational formulae.
      \item In section
      \ref{subsec:matching}, we also provide a realistic algorithm for computing
      the optimal universal attack plan via maximal matching of bipartite
      graphs, inspired by ~\cite{harel2015}.
      \item For the special case of
      distance-based attacks, we proceed in \ref{sec:distbased} to \emph{(1)}
      Establish universal lower-bounds on the adversarial Bayes-optimal error. These bounds are a
      consequence of concentration properties of light-tailed class-conditional
      distributions of the features (e.g sub-Gaussianity, etc.). \emph{(2)}
      Establish universal bounds under more general moment constraints
      conditions on the class-conditional distributions (e.g existence of
      covariance matrices for the class-conditional distributions
        of the features).
\end{itemize}                                                                   %


\section{Preliminaries}
\label{sec:background}
\subsection{Classification framework}
\label{subsec:notations}

All through this manuscript, the feature space will be denoted $\mathcal X$.
The \emph{label (aka classification target)} is a random variable $Y$ with
values in $\mathcal Y = \{1,2\}$, and random variable $X$ called the
\textit{features}, with values in $\mathcal X$. We only consider binary
classification problems in this work. The goal is to predict $Y$ given $X$. This
corresponds to prescribing a measurable function $h:\mathcal X \rightarrow
\{1,2\}$, called a \emph{classifier}. The joint distribution $P_{X,Y}$ of $(X,Y)$ is
unkown. The goal of learning is to find a classifier $h$ (e.g a deep neural net)
such that $h(X) = Y$ as often as possible, possibly under additional constraints.

  In this work, as in ~\cite{bhagoji2019},
we will only consider \emph{balanced} binary classification problems, 
where the labels are \emph{equiprobable} , i.e $\mathbb P(Y=1)=\mathbb P(Y=2)=1/2$.
Multiclass problems
can be considered in one-versus-all fashion. For each label $k \in \{1, 2\}$, we
define the (unnormalized) probability measure $P^k$ on the feature space $\mathcal X$ by
\begin{eqnarray}
  P^k(A) :=\mathbb P(X \in A, Y=k) = 
  \frac{1}{2}\mathbb P(X \in A|Y=k),
  \label{eq:pk}
\end{eqnarray}
for every measurable $A \subseteq X$.
Thus, $P^k$ is an unnormalized probability distribution on the feature space
$\mathcal X$ which integrates to $1/2$, and the classification problem is
therefore entirely captured by the pair $P=(P^1,P^2)$, also called a
\emph{binary experiment}~\cite{ReidW11}.

The notions of metric and pseudo-metric spaces will come of often in the manuscript.
\begin{restatable}[Metric and pseudo-metric spaces]{df}{}
  \label{df:metricspace}
A mapping $d:\mathcal X^2 \to \mathcal X$ is called a pseudo-metric on $\mathcal
X$ iff for all $x,x',z \in \mathcal X$, the following hold:
\begin{itemize}
\item {\bfseries Reflexivity:} $d(x,x) = 0$.
\item {\bfseries Symmetry: }$d(x,x') = d(x',x)$.
\item {\bfseries Triangle inequality:} $d(x,x') \le d(x,z) + d(z,x')$.
\end{itemize}
The pair $(\mathcal X,d)$ is then called a pseudo-metric space.
If in addition, $d(x,x') = 0 \implies x = x'$, then we say $d$ is a metric (or
distance) on $\mathcal X$, and the pair $(\mathcal X, d)$ is called a metric space.
\end{restatable}

\subsection{Models of adversarial attack}
\label{subsec:adv_notations}
In full generality, an \emph{adversarial attack model} on the feature space $\mathcal
X$ (a topological space) is any closed subset $\Omega \subseteq \mathcal X^2$.
Given points $x',x \in \mathcal X$, we call $x'$ an \emph{adversarial example} of $x$ if
$(x,x') \in \Omega$. The subset $\diag(\mathcal X^2) := \{(x,x) \mid x
\in \mathcal X\}$ corresponds to classical / standard classification theory
where there is no adversary.
\begin{wrapfigure}{r}{0.33\textwidth}
  \centering
  \begin{tikzpicture}[scale=.8]
    \node at (-2.3, 0) {\large $\mathcal X$};
    \node at (0, 2.3) {\large $\mathcal X$};        
\draw[very thick] (2,2)--(-2,2)--(-2,-2)--(2,-2)--cycle;
\fill[color=green,draw=black] (-1,2)--(-2,2)--(-2,1)--(1,-2)--(2,-2)--(2,-1)--cycle;

\draw[thick,dashed](-2,2)--(2,-2);
\draw[thick,<->] (-0.5,-0.5)--node[above]{$\varepsilon$}(0.5,0.5);
\end{tikzpicture}
\caption{Showing a generic distance-based attack model. The green region
  corresponds to the set $D_\varepsilon \subseteq \mathcal X^2$ defined in
  \eqref{eq:epsmodel}. An attacker is allowed to swap any point $x \in \mathcal X$ with
  another point $x' \in \mathcal X$, if the pair $(x,x')$ lies in the green
  region.}
\end{wrapfigure}
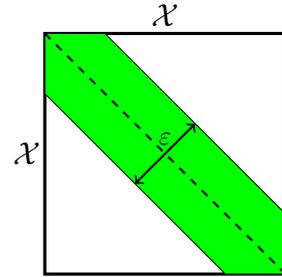
A nontrivial example is the case of so-called
distance-based attacks, where $d$ is a metric on $\mathcal X$ and the attack model
$\Omega  = D_\varepsilon$, where
\begin{eqnarray}
\label{eq:epsmodel}
D_\varepsilon = \{(x,x') \in \mathcal X^2 \mid d(x,x') \le \varepsilon\},
\end{eqnarray}
with $\varepsilon \ge 0$ being the budget of the attacker.
These include the well-known $\ell_p$-norm attacks in finite-dimensional
euclidean spaces usually studied in the literature (e.g
~\cite{szegedy2013intriguing,tsipras18,schmidt2018,goldstein,gilmerspheres18}).


Another instance of our general formulation is when $\Omega = \mathcal
A^\times$, where $\mathcal A^\times := \{(x,x') \in \mathcal X^2 \mid \mathcal
A_x \cap \mathcal A_{x'} \ne \emptyset\},
$
for a system $(\mathcal A_x)_{x \in \mathcal X}$ of subsets of $\mathcal X$. This
framework is already much more general than the distance-based framework (which
is the default setting in the literature), and corresponds to the setting
considered in ~\cite{bhagoji2019}. Working at this level of generallity allows
the possibility to study general attacks like pixel-erasure
attacks~\cite{onepixel}, for example, which cannot be metrically expressed.

A \emph{type$-\Omega$ adversarial attacker} on the feature space $\mathcal X$ is
then a measurable mapping $a:\mathcal X \to \mathcal X$, such that $(x,a(x)) \in
\Omega$ for all $x \in \mathcal X$.
For example, in the distance-based attacks, it
corresponds to a measurable selection for every $x \in \mathcal X$, of some $x'
=a(x) \in \mathcal X$ with $d(x,x') \le \varepsilon$.

For $x \in \mathcal X$, denote $\Omega(x) := \{x' \in \mathcal X \mid
(x,x') \in \Omega\}$. Given a classifier $h:\mathcal X \rightarrow \{1,2\}$,
and a label $k \in \{1,2\}$, define
\begin{eqnarray}
\Omega^{h,k} := \{x \in \mathcal X \mid
h(x') \ne k\text{ for some }x' \in \Omega(x)\},
 \end{eqnarray}
 the set of examples with a "neighbor" whose predicted label is different from $k$. Conditioned on the event $Y=k$, the "size" of the set $\Omega^{h,k}$ is the
adversarial error / risk of the classifier $h$, on the class $k$. This will be made precise in the passage.
\paragraph{Adversarial Bayes-optimal error.}
  The \emph{adversarial error / risk} of a classifier $h$ 
  under type-$\Omega$ adversarial attacks, is defined by
\begin{eqnarray}
  \begin{split}
\err_{\Omega}(h; P^1, P^2) &:= \mathbb P_{X,Y}[h(x') \ne Y\text{ for some }x' \in \Omega(X)] 
=
\sum_{k=1}^2P^k(\Omega^{h,k}).
   \end{split}  
\end{eqnarray}
Thus, $\err_{\Omega}(h;P^1,P^2)$ is the least possible classification error
suffered by $h$ under type-$\Omega$ attacks.
The \emph{adversarial Bayes-optimal error}  $\err^*_{\Omega}(P^1,P^2)$ for
type-$\Omega$ attacks is defined by
  \begin{eqnarray}
    \err_{\Omega}^*(P^1,P^2) := \inf_h \err_{\Omega}(h; P^1, P^2),
    \label{eq:risk}
  \end{eqnarray}    
  where the infimum is taken over all measurable functions $h: \mathcal X
  \rightarrow \{1,2\}$, i.e over all classifiers.
  
Econometrically, adversarial Bayes-optimal error $\err_{\Omega}^*(P^1,P^2)$
corresponds to the maximal payoff of a type-$\Omega$ adversarial attacker who
tries to uniformly ``blunt'' all classifiers at the task of solving the
classification problem $(P^1,P^2)$.
  
  For the special case of distance-based attacks with budget $\varepsilon$, where the attack model is $\Omega = D_\varepsilon$ (defined in Eq. \ref{eq:epsmodel}), we will simply write
  $\err_\varepsilon^*(P^1,P^2)$ in lieu of $\err_{D_\varepsilon}(P^1,P^2)$,
  that is
  \begin{eqnarray}
  \begin{split}
    &\err_\varepsilon^*(P^1,P^2) =
    \sum_{k=1}^2P^k(D_\varepsilon^{h,k})
    =     
    \sum_{k=1}^2P^k(\{x \in \mathcal X \mid \exists x' \in \ball(x;\varepsilon),\;h(x') \ne k\}).
    \end{split}
    \label{eq:epsrisk}
  \end{eqnarray}



\section{Optimal transport characterization of adversarial vulnerability}
\label{sec:variational}
\subsection{Adversarial attacks as transport plans}
\label{sec:adversarial}
Given an attack model $\Omega$ on $\mathcal X$, meaning that $\Omega$ is a
closed subset of $\mathcal X^2$,
consider 
the binary cost-function $c_\Omega:\mathcal X^2 \rightarrow
\{0,1\}$ defined by
\begin{eqnarray}
c_{\Omega}(x,x')
  :=
  \begin{cases}0,&\mbox{ if }(x,x') \in  \Omega,\\1,&\mbox{ else.}
  \end{cases}
  \label{eq:cost}
  \end{eqnarray}
This cost-function is special in that, for every $(x,x') \in  \Omega$, one can
transport $x$ to $x'$ without incurring any cost at all. If $x$ and $x'$ happen
to belong to different classes, then an adversarial attack which replaces $x$
with $x'$ will be perfectly undetectable. As in ~\cite{bhagoji2019}, we start
with a variational formula for measuring the cost of a type-$\Omega$ for the
task of ``blunting'' the Bayes-optimal classifier for the classification problem
$(P^1,P^2)$.
\begin{restatable}[Adversarial total-variation]{df}{}
Let $\ot_{\Omega}(P^1,P^2)$ be the
optimal transport distance between $P^1$ and $P^2$ w.r.t to the ground cost
$c_{\Omega}$ defined in Eq. \eqref{eq:cost}, i.e
  \begin{eqnarray}
  \begin{split}
  \ot_{\Omega}(P^1,P^2)&:=\inf_{\gamma \in \Pi(P^1,P^2)}\int_{\mathcal
    X^2}c_{\Omega}(x_1,x_2)\diff\gamma(x_1,x_2)
    \\
    &=\inf_{\gamma \in \Pi(P^1,P^2)}\mathbb E_\gamma[c_{\Omega}(X_1,X_2)],
  \label{eq:otdef}
  \end{split}
  \end{eqnarray}
where $\Pi(P^1,P^2)$ is the set of all couplings of $P^1$ and $P^2$, i.e the set
of all measures on $\mathcal X^2$ with marginals $P^1$ and $P^2$,
and $(X_1,X_2)$ is a pair of r.vs on $\mathcal X$ with joint distribution $\gamma$.  
\end{restatable}
If $\gamma$ is a coupling of $P^1$ and $P^2$ and $(X_1,X_2) \sim \gamma$, with
abuse of language we shall also refer to $(X_1,X_2)$ as a coupling of $P^1$ and $P^2$.

\begin{restatable}{lm}{}
The ground-cost function  $c_\Omega$ is lower-semicontinuous (l.s.c) on $\mathcal X^2$.
\end{restatable}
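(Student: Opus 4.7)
The plan is to observe that $c_\Omega$ is exactly the indicator function $\mathbf{1}_{\Omega^c}$ of the complement of $\Omega$, and then to use the standard equivalence: the indicator of a set is lower-semicontinuous iff that set is open. Since $\Omega$ is assumed to be a closed subset of $\mathcal{X}^2$ (this is part of the definition of an attack model given earlier in section \ref{subsec:adv_notations}), its complement $\Omega^c$ is open, which should immediately yield the conclusion.

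Concretely, I would verify l.s.c.\ from the definition via super-level sets. One has to check that for every $\alpha \in \mathbb{R}$, the set $S_\alpha := \{(x,x') \in \mathcal{X}^2 \mid c_\Omega(x,x') > \alpha\}$ is open. Since $c_\Omega$ takes only the values $0$ and $1$, there are only three regimes to consider: $\alpha < 0$, $0 \le \alpha < 1$, and $\alpha \ge 1$. In the first case, $S_\alpha = \mathcal{X}^2$; in the third case, $S_\alpha = \emptyset$; both are trivially open. The only nontrivial case is $0 \le \alpha < 1$, where $S_\alpha = \{(x,x') \mid c_\Omega(x,x') = 1\} = \Omega^c$, which is open because $\Omega$ is closed.

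There is essentially no obstacle: the result is purely a reformulation of the closedness of $\Omega$. The only thing worth being careful about is the direction of the inequality (lower-semicontinuity requires \emph{super}-level sets to be open, not sub-level sets), and the fact that the indicator of a \emph{closed} set would be upper-semicontinuous — here we want the indicator of the \emph{open} set $\Omega^c$, which is why l.s.c.\ is the correct conclusion. Alternatively, one could argue sequentially: if $(x_n, x_n') \to (x, x')$ in $\mathcal{X}^2$, then either $c_\Omega(x,x') = 0$ (in which case $\liminf_n c_\Omega(x_n, x_n') \ge 0 = c_\Omega(x,x')$ holds trivially), or $c_\Omega(x,x') = 1$, meaning $(x,x') \in \Omega^c$; since $\Omega^c$ is open, all but finitely many $(x_n, x_n')$ lie in $\Omega^c$, so $c_\Omega(x_n, x_n') = 1$ eventually and the liminf equals $1 = c_\Omega(x,x')$.
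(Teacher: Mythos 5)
Your proof is correct and follows essentially the same route as the paper's: you verify lower-semicontinuity via the level-set characterization, casing on the threshold value and reducing to the closedness of $\Omega$. The only cosmetic difference is that you check super-level sets $\{c_\Omega > \alpha\}$ are open while the paper checks sub-level sets $\{c_\Omega \le t\}$ are closed — these are the same criterion.
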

\begin{proof}
  In fact, we proof that $c_\Omega$ is l.s.c iff $\Omega$ is closed in $\Omega$.
  Recall that the definition of lower-semicontinuity $c_\Omega$ is that the set
  $S_t := \{(x,x') \in \mathcal X^2 \mid c_\Omega(x,x') \le t\}$ is closed in
  $\mathcal X^2$ for every $t \in \mathbb R$. A simple calculation reveals that
  $$
  S_t = \begin{cases}\emptyset,&\mbox{ if }t < 0,\\\Omega,&\mbox{ if }0 \le t
    < 1,\\\mathcal X^2,&\mbox{ if }t \ge 1.\end{cases}
  $$
  Thus, $S_t$ is closed in $\mathcal X^2$ $\forall t \in \mathbb R$ iff $\Omega$
  is closed in $\mathcal X^2$.
\end{proof}

Thus, $\ot_\Omega(\cdot,\cdot)$ defines a distance over measures on the feature
space $\mathcal X$. In the particular case of distance-based attacks, we have
$\Omega = D_\varepsilon$ as defined in Eq. \eqref{eq:epsmodel}, and formula
\eqref{eq:otdef} can be equivalently written as
\begin{eqnarray}
  \ot_{D_\varepsilon} = \tv_\varepsilon(P^1,P^2) := \inf_{(X_1,X_2)}\mathbb
  P(d(X_1,X_2) > \varepsilon),
  \label{eq:perturbedvar}
\end{eqnarray}
where the infimum is taken over all couplings $(X_1,X_2)$ of $P^1$ and $P^2$. The
joint distribution $\gamma_\varepsilon$ of $(X_1,X_2)$ is then an optimal
adversarial attack plan for the classification problem $(P^1,P^2)$. Note that
the case $\varepsilon = 0$ conveniently corresponds to the usual definition of
total-variation, namely
\begin{eqnarray}
\begin{split}
  \tv(P^1,P^2) :&= \sup_{A \subseteq \mathcal X \text{
  measurable}}P^1(A)-P^2(A)= \inf_{(X_1,X_2)}\mathbb P(X_1 \ne X_2),
  \end{split}
  \label{eq:tvdef}
\end{eqnarray}
The RHS of the above formula is usually referred to as \emph{Strassen's} formula
for total-variation.

Coincidentally, the metric $\tv_\varepsilon$ in \eqref{eq:perturbedvar} has
been studied in context of statistical testing, under the name "perturbed
variation"~\cite{harel2015} as robust replace for usual total-variation.
Moreover, the authors proposed an efficient algorithm for computing both the
optimal plan $\gamma_\varepsilon$ as a maximal graph matching in a bipartite
graph. In has also been studied in
~\cite{yao2020} in the context of adversarial attacks.

\paragraph{Link to classical theory of classification.}
It is well-known~\cite{ReidW11} in standard classification theory that the
Bayes-optimal error is exactly equal to
\begin{eqnarray}
  \err^*(P^1,P^2) := \frac{1}{2}(1-\tv(P^1,P^2)).
  \label{eq:classical}
\end{eqnarray}
Thus, one might expect that the adversarial total-variation metric
$\tv_\Omega(\cdot,\cdot)$ defined in Eq. \eqref{eq:otdef} would play a role in
control of the adversarial Bayes-optimal error $\err_\Omega^*(\cdot,\cdot)$
(defined in Eq. \eqref{eq:risk}) which is similar to the role played by ordinary
total-variation $\tv(\cdot,\cdot)$ (defined in Eq. \eqref{eq:tvdef}) plays in
formula \eqref{eq:classical} for the classical / standard Bayes-optimal error.
This is indeed the case.

  \begin{restatable}{rmk}{}
    The adversarial Bayes-optimal error $\err^*_\Omega(P^1,P^2)$ under type-$\Omega$
    adversarial attacks should not be confused with the adversarial error of the
    standard / classical Bayes-optimal classifier $h^* := \argmin_h
    \err(P^1,P^2)$ for the unattacked classification problem.
    In fact $\err^*_\Omega(P^1,P^2) \le \err^*_\Omega(h^*; P^1, P^2)$, and we
    can construct explicit scenarios in which the inequality is strict (e.g
    one-dimensional classification problem whose class-conditional distributions are
    gaussians with different means and same variance, under the distance-based
    attack model $\Omega := \{(x,x') \in \mathbb R^2 \mid |x-x'| \le \varepsilon\}$).
  \end{restatable}

\begin{restatable}[Extension of Theorem 1 of ~\cite{bhagoji2019}]{prop}{main}
For any attack model $\Omega$ on the feature space $\mathcal X$, the
adversarial Bayes-optimal error under type-$\Omega$ attacks 
 \begin{eqnarray}
   \err_{\Omega}^*(P^1,P^2) \ge \frac{1}{2}(1-\ot_{\Omega}(P^1,P^2)).
 \end{eqnarray}
\label{thm:main}
\end{restatable}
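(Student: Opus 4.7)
The plan is to bound, for an arbitrary classifier $h$ and an arbitrary coupling $\gamma \in \Pi(P^1,P^2)$, the $\gamma$-mass of the ``costless'' set $\Omega$ by the adversarial error $\err_\Omega(h;P^1,P^2)$. Since $c_\Omega$ is the indicator of $\Omega^c$, an upper bound on $\gamma(\Omega)$ automatically yields a lower bound on $\int c_\Omega\,d\gamma$; infimizing first over $\gamma$ and then over $h$ will then deliver the stated inequality.

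The crux of the argument is the set-theoretic inclusion
\[
\Omega \;\subseteq\; \bigl(\Omega^{h,1}\times \mathcal X\bigr)\,\cup\,\bigl(\mathcal X\times \Omega^{h,2}\bigr).
\]
To verify this, I would fix $(x_1,x_2) \in \Omega$, equivalently $x_2 \in \Omega(x_1)$, and split on the value of $h(x_2)$. If $h(x_2) = 2$, then $x_2$ itself is a neighbour of $x_1$ with label $\ne 1$, placing $x_1 \in \Omega^{h,1}$. If instead $h(x_2) = 1$, I would invoke the natural reflexivity $\diag(\mathcal X^2) \subseteq \Omega$ (which holds automatically for the distance-based model $D_\varepsilon$, and should be read as the ``attacker may choose to do nothing'' property): then $x_2 \in \Omega(x_2)$ with $h(x_2) \ne 2$, so $x_2 \in \Omega^{h,2}$.

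With the inclusion in hand, the marginal identities for $\gamma$ give
\[
\gamma(\Omega) \;\le\; \gamma(\Omega^{h,1}\times \mathcal X) + \gamma(\mathcal X\times \Omega^{h,2}) \;=\; P^1(\Omega^{h,1}) + P^2(\Omega^{h,2}) \;=\; \err_\Omega(h;P^1,P^2).
\]
Writing $\int c_\Omega\,d\gamma = \gamma(\mathcal X^2) - \gamma(\Omega)$, rearranging, and taking infima (with the usual normalisation bookkeeping between the sub-probability measures $P^k$ of mass $\tfrac12$ in \eqref{eq:pk} and the associated class-conditional probability measures $2P^k$) yields the advertised bound $\err_\Omega^*(P^1,P^2) \ge \tfrac12(1 - \ot_\Omega(P^1,P^2))$.

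In my view the main obstacle is the case analysis in the set inclusion, whose ``$h(x_2)=1$'' branch silently rests on $\diag(\mathcal X^2) \subseteq \Omega$; everything else is a union bound combined with the marginal identity, and the constant $\tfrac12$ in the conclusion is just a trace of the normalisation chosen in \eqref{eq:pk}.
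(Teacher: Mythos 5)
Your argument is correct, and it takes a genuinely different route from the paper's. You work entirely on the primal side of the optimal-transport problem: you prove the set inclusion $\Omega \subseteq (\Omega^{h,1}\times\mathcal X)\cup(\mathcal X\times\Omega^{h,2})$, push it through the marginal constraints of any $\gamma\in\Pi(P^1,P^2)$ to bound $\gamma(\Omega)$ by the adversarial error, and then infimize. The paper instead goes through Kantorovich--Rubinstein duality: it builds a derived ``abstaining'' classifier $\widetilde h$ (outputting $\perp$ when some $\Omega$-neighbour disagrees), packages the indicators $g_0(x')=\mathbbm{1}[\widetilde h(x')=1]$ and $f_0(x)=1-\mathbbm{1}[\widetilde h(x)=2]$ as a feasible pair of Kantorovich potentials for $c_\Omega$, and reads off the bound from the dual supremum. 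The two proofs are really mirror images --- your inclusion is exactly the primal shadow of the paper's potential inequality $g_0(x')-f_0(x)\le c_\Omega(x',x)$ --- but yours is the more elementary one, needing only a union bound and the marginal identity rather than the duality theorem. You are also right to flag $\diag(\mathcal X^2)\subseteq\Omega$ as the load-bearing hidden hypothesis: the paper does state it explicitly (together with symmetry of $\Omega$) just before its proof, and both arguments use it in essentially the same way --- your proof in the ``$h(x_2)=1$'' branch, the paper's proof in verifying the potential inequality on $\Omega$. One small advantage of your route is that it needs only reflexivity and not symmetry of $\Omega$, whereas the paper's proof invokes symmetry when passing from $c_\Omega(x',x)$ to $\Omega(x)$. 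The normalisation bookkeeping you allude to (replacing the mass-$\tfrac12$ measures $P^k$ of Eq.~\eqref{eq:pk} by the class-conditionals $2P^k$ before computing $\ot_\Omega$) is indeed the source of the $\tfrac12$ factor and is the same renormalisation the paper performs, e.g., at the start of the proof of Theorem~\ref{thm:badass}.
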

Note that the reverse inequality does not hold in general. A remarkable
exception is the case of distance-based attacks with a distance $d$ that turns the
 feature space $\mathcal X$ into a \emph{complete separable} metric space with
 the \emph{midpoint property}\footnote{That is, for every $x,x' \in \mathcal X$,
   there exists $z \in \mathcal X$ such that $d(x,z) = d(x',z)=d(x,x')/2$.}.
 Examples of such spaces include complete riemannian manifolds and any closed
 convex subset of a separable Banach space. We shall return to such spaces in
 section \ref{subsec:augment}.
\subsection{Characterizing the adversarial error via optimal transport}
Henceforth, assume the feature space $\mathcal X$ is
\emph{Polish} (i.e $\mathcal X$ is metrizable, complete, and
separable).
 Finally, given a subset $U \subseteq \mathcal
X$, define its $\Omega$-closure $\overline{U}_{\Omega}$ by
\begin{eqnarray}
\overline{U}_{\Omega} := \{x \in \mathcal X \mid (x,x') \in \Omega \text{ for
  some }x' \in U\}.
  \label{eq:Uneb}
\end{eqnarray}
In the case of metric attacks where $\Omega = D_\varepsilon := \{(x,x') \in
\mathcal X^2 \mid d(x,x') \le \varepsilon\}$, we have $\overline{U}_\Omega =
U^{\varepsilon}$, where $U^\varepsilon$ is the $\varepsilon$-neighborhood of $U$
defined by
 \begin{eqnarray}
   U^\varepsilon := \{x \in \mathcal X \mid d(x,x') \le \varepsilon\text{ for
   some }x' \in U\}.
 \end{eqnarray}

The following theorem is a direct application of \emph{Strassen's
  Marriage Theorem} (see  ~\cite[Theorem 1.27]{villaniTopics}), and is as a
first simplification of the complicated distance $\ot_{\Omega}$ that appears in
Proposition \ref{thm:main}. For distance-based attacks has been, a special case
of our result has been independently obtained in ~\cite{pydi2019}.

 \begin{restatable}{thm}{strassen}
   Let $\Omega$ be an attack model on $\mathcal X$. Then we have the identity
 \begin{eqnarray}
 \ot_{\Omega}(P^1,P^2) = \underset{ U \subseteq \mathcal
   X\;\text{closed}}{\sup}\; P^1(U)-P^2(\overline{U}_{\Omega}).
 \end{eqnarray}
 In particular, for distance-based attacks we have
 $
\ot_{\varepsilon}(P^1,P^2) = \underset{U \subseteq \mathcal X\text{
    closed}}{\sup}\;P^2(U)-P^1(U^{\varepsilon}).
$
 \label{thm:strassen}
\end{restatable}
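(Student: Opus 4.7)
The theorem is announced as a direct consequence of Strassen's Marriage Theorem, so the plan is to package that result in the language of the paper. Strassen (Villani, \emph{Topics in OT}, Thm.~1.27) asserts that for Borel probability measures $\mu,\nu$ on a Polish space $\mathcal X$ and a closed set $F\subseteq\mathcal X^2$,
$$
\inf_{\pi\in\Pi(\mu,\nu)}\pi(F^c)
\;=\;\sup_{A\subseteq\mathcal X\text{ closed}}\bigl[\mu(A)-\nu(F[A])\bigr],
$$
where $F[A]:=\{x'\mid\exists x\in A,\,(x,x')\in F\}$. The hypotheses we need are already in place: $\mathcal X$ is Polish by assumption, $\Omega$ is closed by definition of an attack model, and by the preceding lemma the cost $c_\Omega$ is lower-semicontinuous.

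The first concrete step is to reduce to the probability setting. Since $P^1,P^2$ are sub-probabilities of mass $\tfrac12$, I would set $\tilde P^k:=2P^k$, giving genuine probability measures; any $\gamma\in\Pi(P^1,P^2)$ rescales to $\tilde\gamma:=2\gamma\in\Pi(\tilde P^1,\tilde P^2)$ with $\int c_\Omega\,d\tilde\gamma=2\int c_\Omega\,d\gamma$. Applying Strassen to $(\tilde P^1,\tilde P^2,\Omega)$ and dividing by $2$ yields
$$
\ot_\Omega(P^1,P^2)
\;=\;\sup_{U\subseteq\mathcal X\text{ closed}}\bigl[P^1(U)-P^2(\Omega[U])\bigr].
$$
It then remains to identify $\Omega[U]$ with the $\Omega$-closure $\overline{U}_\Omega$ of \eqref{eq:Uneb}; this is essentially a definitional check, modulo an obvious transposition when $\Omega$ is not symmetric (if needed one applies the argument to the transposed relation and the swapped marginals, using that the OT distance with cost $c_\Omega$ satisfies $\ot_\Omega(P^1,P^2)=\ot_{\Omega^T}(P^2,P^1)$).

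For the distance-based specialization, the relation $D_\varepsilon$ is symmetric, so the cost $c_{D_\varepsilon}$ is symmetric and $\ot_\varepsilon(P^1,P^2)=\ot_\varepsilon(P^2,P^1)$; together with the identification $\overline{U}_{D_\varepsilon}=U^\varepsilon$, substituting into the general identity with the roles of $P^1,P^2$ exchanged gives the stated formula $\sup_U\,[P^2(U)-P^1(U^\varepsilon)]$.

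The only real technical subtlety (and the reason this is not an immediate one-line Fenchel duality) is measurability: $\Omega[U]$ is a projection of a closed subset of $\mathcal X^2$ and is in general only analytic, not Borel, so $\nu(\Omega[U])$ must be interpreted via outer/inner regularity. This is precisely the point that Strassen's Marriage Theorem handles in the Polish setting, and is the reason the restriction to closed $U$ in the supremum is both natural and essential; no additional work is required from us beyond invoking the theorem in its general form.
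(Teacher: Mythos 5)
Your proof is correct and follows the same route as the paper: reduce $\ot_\Omega(P^1,P^2)$ to $\inf_{\gamma\in\Pi(P^1,P^2)}\gamma(\Omega^c)$ and invoke Strassen's marriage theorem, then specialize to $D_\varepsilon$. You are in fact more careful than the paper's own proof, which writes $\gamma(\Omega)$ where it should be $\gamma(\Omega^c)$, silently swaps the roles of $P^1$ and $P^2$ between the statement and the proof, and omits both the rescaling of the sub-probabilities $P^k$ to probability measures and the transposition needed to reconcile the forward image appearing in Strassen's theorem with the backward image $\overline{U}_\Omega$ when $\Omega$ is not symmetric — all points you address explicitly.
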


We now present a lemma which allows us to rewrite the optimal transport distance
$\ot_\Omega$ as a linear program over partial transport plans will be one of the
main ingredients in the proof of Thm \ref{thm:badass} below. The lemma is
important in its own right.
\begin{restatable}{lm}{partial}
  Let $\Omega$ be an attack model on the feature space $\mathcal X$. Then
  \begin{eqnarray}
    \tv_\Omega(P^1,P^2) = \inf_{\gamma \in \Pi_{\le} (P^1, P^2),\;\supp(\gamma)
    \subseteq \Omega}1-\gamma(\mathcal X^2),
  \end{eqnarray}
  where $\Pi_{\le} (P^1, P^2)$ is the set of partial couplings of $P^1$ and
  $P^2$, i.e Borel measures on $\mathcal X^2$ whose marginals are
  dominated by the $P^k$'s.
  \label{lm:partial}
\end{restatable}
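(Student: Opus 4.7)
The plan is to exploit the $\{0,1\}$-valued structure of $c_\Omega = 1 - \mathbf{1}_\Omega$ to recast $\ot_\Omega$ as a deficit of mass inside $\Omega$, and then to convert the resulting extremal problem over full couplings into one over partial couplings supported in $\Omega$. Concretely, for any $\gamma \in \Pi(P^1,P^2)$, the identity $c_\Omega = 1 - \mathbf{1}_\Omega$ gives $\int_{\mathcal X^2} c_\Omega\, d\gamma = \gamma(\mathcal X^2) - \gamma(\Omega) = 1 - \gamma(\Omega)$ (using the normalization $P^i(\mathcal X) = 1$). Taking an infimum shows the lemma is equivalent to the identity
\[
\sup_{\gamma \in \Pi(P^1,P^2)} \gamma(\Omega) \;=\; \sup_{\gamma' \in \Pi_\le(P^1,P^2),\;\supp(\gamma') \subseteq \Omega} \gamma'(\mathcal X^2).
\]

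The inequality $\le$ is immediate by restriction: given a full coupling $\gamma \in \Pi(P^1, P^2)$, set $\gamma'$ equal to the restriction of $\gamma$ to $\Omega$; this $\gamma'$ has support in $\Omega$, marginals pointwise dominated by $P^1, P^2$, and total mass $\gamma'(\mathcal X^2) = \gamma(\Omega)$. The reverse inequality $\ge$ is by extension: given a partial $\gamma'$ with $\supp(\gamma') \subseteq \Omega$ and marginals $\gamma'_i \le P^i$, define the residuals $\mu_i := P^i - \gamma'_i$, which are Borel measures of common total mass $1 - \gamma'(\mathcal X^2)$. By Polishness of $\mathcal X$ (already in force from Section~\ref{sec:variational}), a coupling of $\mu_1$ and $\mu_2$ exists, e.g.\ the rescaled product $\tilde\gamma := \mu_1 \otimes \mu_2 / (1 - \gamma'(\mathcal X^2))$ (or the zero measure if $\gamma'$ is already full). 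Adding $\tilde\gamma$ to $\gamma'$ then produces a full coupling $\gamma := \gamma' + \tilde\gamma \in \Pi(P^1,P^2)$ with $\gamma(\Omega) \ge \gamma'(\Omega) = \gamma'(\mathcal X^2)$, establishing $\ge$.

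The only nontrivial step is the extension in the $\ge$ direction, and it is mild: one only needs existence of a Borel coupling of the two residual sub-measures $\mu_1, \mu_2$ of equal total mass, which follows from standard measure-theoretic constructions on a Polish space. Combining the two directions yields $\ot_\Omega(P^1, P^2) = 1 - \sup_{\gamma'} \gamma'(\mathcal X^2) = \inf_{\gamma'}\bigl(1 - \gamma'(\mathcal X^2)\bigr)$, which is the claimed identity.
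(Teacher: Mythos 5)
Your proof is correct and takes essentially the same route as the paper's: both directions are handled by restricting a full coupling to $\Omega$ (for $\leq$) and by extending a partial coupling supported on $\Omega$ with an arbitrary coupling of the residual measures $P^i - \proj^i_\#\gamma'$ (for $\geq$). Your write-up is in fact cleaner than the paper's own, which contains a few sign and notational slips (e.g.\ writing $\gamma(\Omega)$ where $\gamma(\Omega^c)$ is intended, and ``restriction to $\varepsilon'$'' where ``restriction to $\Omega$'' is meant), but the underlying argument is identical.
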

\begin{proof}
Define  the quantity
$$
E(P^1,P^2):=\inf_{\gamma\in \Pi_{\le}(P^1,P^2),\; \supp(\gamma)\subseteq
  \Omega} 1 - \gamma(\mathcal{X}^2),
$$
where $\Pi_{\le}$ denotes the set of partial transport plans, i.e. probabilities
on $\mathcal{X}^2$ with marginals smaller than $P^1$ and $P^2$ respectively.
First, let us show that $\text{OT}_{\Omega}=E$. Let $\gamma \in
\Pi(P^1,P^2)$ and let $\tilde \gamma$ be its restriction to $\varepsilon'$.
Then $\tilde \gamma$ is feasible for $E$ and it holds
$\gamma(\Omega)=1-\tilde\gamma(\mathcal{X}^2)$ so $E\leq \text{OT}_{\Omega}$.
Conversely, let $\gamma$ be feasible for $E$ and consider any $\tilde \gamma\in
\Pi(P^1-\proj^1_\#\gamma, P^2-\proj^2_\#\gamma)$. Then $\gamma +\tilde
\gamma$ is feasible for $\text{OT}_{\Omega}$ and $(\gamma+\tilde
\gamma)(\Omega)= \tilde \gamma(\Omega)\leq \tilde \gamma(\mathcal{X}^2) =
1-\gamma(\mathcal{X}^2)$. So, $\text{OT}_{\Omega}\leq E$ and thus
$\text{OT}_{\Omega}=E$.
\end{proof}

\subsection{Adversarial couplings}
\label{subsec:augment}
We now turn to distance-based attacks and refine representation presented in the
previous lemma. Recall that a metric space is said to have the midpoint property
if for every pair of points $z$ and $z'$, there is a point $\eta(z,z')$ in the
space which seats exactly halfway between them. Examples of such spaces include
normed vector-spaces and riemannian manifolds. For our next result, it will be
important to be able to select the midpoint $\eta(z,z')$ in a measurable manner
almost-everywhere.
\begin{restatable}[Measurable Midpoint (MM) property]{cond}{mmp}
 A metric space $\mathcal Z = (\mathcal Z,d)$ is said to satify the
 measurable midpoint (MM) property if for every Borel measure $Q$ on $\mathcal
 Z^2$ there exists a $Q$-measurable map $\eta: \mathcal Z^2 \to \mathcal Z$ such
 that $d(z,\eta(z,z')) = d(z',\eta(z,z')) = d(z,z')/2$ for all $z,z' \in \mathcal Z$.
  \label{cond:mmp}
\end{restatable}
The feature space for most problems in machine learning together with the
distances usually used in adversarial attacks, satisfies the measurable midpoint
property \ref{cond:mmp}. Indeed, generic examples of metric spaces
which satisfy this condition include:
\begin{itemize}
  \item Hilbert spaces.
  \item Closed convex subsets of Banach spaces.
  \item Complete riemannian manifolds (equipped
    with the geodesic distance).
  \item Complete separable metric spaces with the midpoint property.
  \end{itemize}
In fact, in the first two examples, the midpoint mapping $\eta$ can be chosen to
be continuous everywhere. The last example, which can be proved via the classical
\emph{Kuratowski-Ryll-Nardzewski measurable selection theorem}, is the most
general and most remarkable, and deserves an explicit restatement.
\begin{restatable}{lm}{csmp}
  Every complete separable metric space which has the midpoint property also has
  the measurable midpoint property.
\end{restatable}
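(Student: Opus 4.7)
The plan is to realise $\eta$ as a measurable selection of the set-valued midpoint map and to apply a classical uniformisation / measurable selection theorem. The crucial point is that $Q$-measurability is weaker than Borel measurability, which allows us to absorb the analyticity gap produced by projections.

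First, I would define the set-valued map $M:\mathcal Z^2\to 2^{\mathcal Z}$ by
$$
M(z,z'):=\{w\in\mathcal Z \,:\, d(z,w)=d(z',w)=d(z,z')/2\}.
$$
The assumed midpoint property guarantees $M(z,z')\ne\emptyset$ for every $(z,z')\in\mathcal Z^2$, and continuity of the metric makes each value $M(z,z')$ a closed subset of $\mathcal Z$. The same continuity shows that
$$
\mathrm{Graph}(M)=\{(z,z',w)\in\mathcal Z^3 \,:\, d(z,w)=d(z',w)=d(z,z')/2\}
$$
is a closed subset of the Polish space $\mathcal Z^3$, hence Borel.

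Second, I would invoke a measurable uniformisation theorem. Two essentially equivalent routes are available. Route A: for every open $U\subseteq\mathcal Z$, the set
$$
\{(z,z') \,:\, M(z,z')\cap U\ne\emptyset\}=\pi_{\mathcal Z^2}\bigl(\mathrm{Graph}(M)\cap(\mathcal Z^2\times U)\bigr)
$$
is the projection of a Borel subset of a Polish product, hence analytic and therefore universally measurable; in particular it belongs to the $Q$-completion $\mathcal B_Q$ of the Borel $\sigma$-algebra on $\mathcal Z^2$. Thus $M$ is weakly $\mathcal B_Q$-measurable into the Polish space $\mathcal Z$, and the Kuratowski-Ryll-Nardzewski theorem yields a $\mathcal B_Q$-measurable, and so $Q$-measurable, selection $\eta(z,z')\in M(z,z')$. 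Route B, which is slightly more direct, applies the Jankov-von Neumann uniformisation theorem to the Borel set $\mathrm{Graph}(M)\subseteq\mathcal Z^2\times\mathcal Z$ (whose projection onto the first factor is all of $\mathcal Z^2$ by the midpoint property), producing a universally measurable selection $\eta$ in one step.

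The main obstacle is exactly this measurability step: projections of Borel sets in Polish spaces are generally only analytic, so one cannot hope for a Borel midpoint map without additional structure (e.g.\ uniqueness of midpoints, as in Hilbert or uniformly convex Banach spaces, where $\eta$ is even continuous). The condition nevertheless only asks for $Q$-measurability, which is precisely strong enough to accommodate the analytic projection — either through the $Q$-completion of the Borel $\sigma$-algebra, or via universal measurability — both standard consequences of the classical Lusin-Choquet theory of analytic sets.
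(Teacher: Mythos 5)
Your proof is correct and follows the route the paper itself points to, namely the Kuratowski--Ryll-Nardzewski selection theorem applied to the closed-graph, closed-valued midpoint correspondence, with the analytic-projection / universal-measurability step correctly absorbed into the $Q$-completion of the Borel $\sigma$-algebra. Your alternative Route B via Jankov--von Neumann uniformisation is a slightly slicker one-step variant, but it rests on the same descriptive-set-theoretic facts.
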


The following theorem, which is proved in the appendix (as are all the other
theorems in this manuscript), is one of our main results. 

\begin{restatable}[Adversarially augmented data, a proxy for adversarial robustness]{thm}{badass}
  \label{thm:badass}
Consider a classification problem $(P^1,P^2)$. Suppose $d$ is a distance on the
feature space $\mathcal X$ with the MM property \ref{cond:mmp}, and consider the
distance-based attack model $D_\varepsilon := \{(x,x') \in \mathcal X^2 \mid
d(x,x') \le \varepsilon\}$.  Recall the
   definition of $\tv_\varepsilon(P^1,P^2)$ from Eq.
   ~\eqref{eq:perturbedvar}.
  Define 
    \begin{eqnarray}
      \begin{split}
        \widetilde{\tv}_{\varepsilon}(P^1,P^2) &:= \inf_{\gamma_1,\gamma_2}
        \tv({\proj^2}_{\#}\gamma_1,{\proj^1}_{\#}\gamma_2),\\        
        \widetilde{\widetilde{\tv}}_{\varepsilon}(P^1,P^2) &:= \inf_{a_1,a_2
          \;\text{type-}D_{\varepsilon/2}}\tv({a_1}_{\#}P^2,{a_2}_{\#}P^1),
     \end{split}
    \end{eqnarray}
    where "\#" denotes pushfoward of measures and the 1st inf. is taken
   over all pairs of distributions $(\gamma_1,\gamma_2)$ on $\mathcal X^2$ concentrated on
   $D_{\varepsilon/2}$ 
   such that
   ${\proj^1}_{\#}\gamma_1=P^2$ and ${\proj^2}_{\#}\gamma_2=P^1$. It holds that
   \begin{eqnarray}
     \tv_{\varepsilon}(P^1,P^2) = \widetilde{\tv}_{{\varepsilon}}(P^1,P^2) \le
     \widetilde{\widetilde{\tv}}_{{\varepsilon}}(P^1,P^2),
     \label{eq:great}
   \end{eqnarray}
   and there is equality if $P^1$ and $P^2$ have densities w.r.t the Borel
   measure on $\mathcal X$.

Consequently, we have the following lower-bound for the adversarial Bayes-optimal error:
  \begin{eqnarray}
  \begin{split}
\err_{\varepsilon}^*(P^1,P^2) &\ge \frac{1}{2}(1-\tv_{\varepsilon}(P^1,P^2)) \frac{1}{2}(1-\widetilde{\tv}_{\varepsilon}(P^1,P^2)) \ge                                           \frac{1}{2}(1-\widetilde{\widetilde{\tv}}_{\varepsilon}(P^1,P^2)).           \end{split}
   \label{eq:great}
 \end{eqnarray}
\label{thm:great}
\end{restatable}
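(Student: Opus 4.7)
The plan is to establish the chain $\tv_\varepsilon = \widetilde{\tv}_\varepsilon \le \widetilde{\widetilde{\tv}}_\varepsilon$, with equality in the second place when the $P^k$'s are absolutely continuous, and then read off the error bound by substituting into Proposition~\ref{thm:main}. The identity $\tv_\varepsilon = \widetilde{\tv}_\varepsilon$ is the core content and I would prove it by two one-sided arguments that are essentially inverse constructions, both relying on Lemma~\ref{lm:partial} (which rewrites $\tv_\varepsilon$ as the ``transport deficit'' of a partial coupling on $D_\varepsilon$) and on the measurable midpoint selector $\eta:D_\varepsilon\to\mathcal X$ granted by Condition~\ref{cond:mmp}.

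For $\widetilde{\tv}_\varepsilon \le \tv_\varepsilon$, I start from an optimal partial coupling $\gamma$ on $D_\varepsilon$ in the sense of Lemma~\ref{lm:partial}, and set $\rho_k := P^k - \proj^k_{\#}\gamma$ for the residual mass ($|\rho_1|=|\rho_2|=1-\gamma(\mathcal X^2)$). I then define
\[
\gamma_1 := (\proj^2,\eta)_{\#}\gamma + (\mathrm{id},\mathrm{id})_{\#}\rho_2,\qquad
\gamma_2 := (\eta,\proj^1)_{\#}\gamma + (\mathrm{id},\mathrm{id})_{\#}\rho_1 .
\]
Both are concentrated on $D_{\varepsilon/2}$ (midpoints are $\varepsilon/2$-close, the diagonal is $0$-close), and they have the prescribed marginals $\proj^1_{\#}\gamma_1=P^2$ and $\proj^2_{\#}\gamma_2=P^1$. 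The marginals that feed into the TV in the definition of $\widetilde{\tv}_\varepsilon$ share the common summand $\eta_{\#}\gamma$, so it cancels and $\tv(\proj^2_{\#}\gamma_1,\proj^1_{\#}\gamma_2)=\tv(\rho_1,\rho_2)\le|\rho_1|=1-\gamma(\mathcal X^2)=\tv_\varepsilon$.

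For the reverse inequality $\tv_\varepsilon \le \widetilde{\tv}_\varepsilon$, I run the same construction in reverse via a gluing argument. Given a feasible $(\gamma_1,\gamma_2)$, let $\mu_1 := \proj^2_{\#}\gamma_1$, $\mu_2 := \proj^1_{\#}\gamma_2$, and let $\beta\in\Pi(\mu_1,\mu_2)$ achieve $\tv(\mu_1,\mu_2)$, so that $\beta$ places mass $1-\tv(\mu_1,\mu_2)=|\mu_1\wedge\mu_2|$ on the diagonal (Strassen). I glue $\gamma_1,\beta,\gamma_2$ into a measure $\pi$ on $\mathcal X^4$ with $\pi_{12}=\gamma_1$, $\pi_{23}=\beta$, $\pi_{34}=\gamma_2$, restrict it to $\{x_2=x_3\}$, and push forward to coordinates $(1,4)$. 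By the triangle inequality the support of the resulting sub-coupling of $(P^2,P^1)$ lies in $D_\varepsilon$, and its total mass equals $1-\tv(\mu_1,\mu_2)$; Lemma~\ref{lm:partial} then gives $\tv_\varepsilon\le\tv(\mu_1,\mu_2)$, and infimizing over $(\gamma_1,\gamma_2)$ closes the loop. The inequality $\widetilde{\tv}_\varepsilon\le\widetilde{\widetilde{\tv}}_\varepsilon$ is immediate: any deterministic pair $(a_1,a_2)$ yields a feasible pair $\gamma_1=(\mathrm{id},a_1)_{\#}P^2$, $\gamma_2=(a_2,\mathrm{id})_{\#}P^1$ with matching objective.

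For the equality under densities, I would disintegrate $\gamma_1$ along its first marginal as $\gamma_1(dx,dy)=P^2(dx)\,k_1(x,dy)$ with $k_1(x,\cdot)$ supported in $\overline{B(x,\varepsilon/2)}$, and invoke a Kuratowski–Ryll-Nardzewski type measurable selection together with a Monge-approximation result valid for atomless sources to replace the stochastic kernel $k_1$ by a deterministic map $a_1$ with $d(x,a_1(x))\le \varepsilon/2$ whose pushforward is TV-close to $\mu_1$; the symmetric reduction on $\gamma_2$ yields $\widetilde{\widetilde{\tv}}_\varepsilon\le\widetilde{\tv}_\varepsilon$. Plugging the chain into Proposition~\ref{thm:main} (with $\Omega=D_\varepsilon$, so that $\ot_\Omega=\tv_\varepsilon$) then delivers the stated lower-bound on $\err^*_\varepsilon(P^1,P^2)$. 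The genuine difficulty sits in this last step: stochastic-to-deterministic reduction without blowing up TV is precisely where atomlessness is used in an essential way, and is the only place in the argument where the density hypothesis is required.
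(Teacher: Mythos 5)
Your argument is correct and follows essentially the same path as the paper's proof: both reduce $\tv_\varepsilon$ via Lemma~\ref{lm:partial} to a partial-coupling deficit, split the optimal $\gamma$ with the measurable-midpoint selector to prove $\widetilde{\tv}_\varepsilon\le\tv_\varepsilon$, use a gluing-through-the-diagonal argument for the reverse inequality, dismiss $\widetilde{\tv}_\varepsilon\le\widetilde{\widetilde{\tv}}_\varepsilon$ as the deterministic specialization, and invoke a Monge-map existence result under absolute continuity before plugging into Proposition~\ref{thm:main}. The only small divergence is the final step, where the paper directly cites the existence of an exact $W_\infty$-optimal transport map for an atomless source (so that the pushforward hits ${\proj^2}_\#\gamma_1$ on the nose), whereas you propose a TV-approximate Monge selection; both routes close the same gap.
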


\subsection{Case study: (separable) Banach spaces}
Thm. \ref{thm:badass} has several important consequences, which will be
heavily explored in the sequel. A particularly simple consequence is the
Consider the special case where $\mathcal X=(\mathcal X,\|\cdot\|)$, a
separable Banach space 
Given a point $z \in \mathcal X$, let $P^1+z$ be the translation of $P^1$ by
$z$.
For $z,z' \in \ball_{\mathcal X}(0;\varepsilon/2)$, consider the type-$D_{\varepsilon/2}$
distance-based attacks $a^{z,z'}_1,a_2^{z,z'}:\mathcal X \rightarrow \mathcal X$ defined by
$a^{z,z'}_1(x)=x-z$ and $a^{z,z'}_{2}(x)=x+z'$. One computes
\begin{eqnarray*}
\begin{split}
  \tv_{\varepsilon}(P^1,P^2) &:= \inf_{a_1,a_2\text{ type-}D_{\varepsilon/2}}\tv({a_1}_{\#}P^1,{a_2}_{\#}P^2)
  \\
  &\le \inf_{\|z\| \le \varepsilon/2,\;\|z'\| \le
    \varepsilon/2}\tv({a^{z,z'}_1}_{\#}P^2,{a^{z,z'}_{2}}_{\#}P^1)
    \\  
  &= \inf_{\|z\| \le \varepsilon/2,\;\|z'\| \le
    \varepsilon/2}\tv(P^1 - z,P^2 + z')\\
&\le \inf_{\|z\| \le \varepsilon}\tv(P^1,P^2 + z),
\end{split}
\end{eqnarray*}
where $P^2 + z$ is the translation of distribution $P^2$ by the vector $z$. Note
that in the above upper bound, the LHS can be made very concrete in case the
distributions are prototypical (e.g multivariate Gaussians with same covariance
matrix; etc.). Thus we have the following result
\begin{restatable}{cor}{tool}
Let the feature space $\mathcal X$ be a normed vector space and consider a
distance-based attack model $D_\varepsilon = \{(x,x') \in \mathcal X^2 \mid \|x'-x\|
\le \varepsilon\}$. Then, it holds that
\begin{eqnarray}
\err^*_{\varepsilon}(P^1,P^2) \ge \frac{1}{2}\left(1-\sup_{\|z\| \le
  \varepsilon}\tv(P^1,P^2+z)\right).
  \label{eq:z}
\end{eqnarray}
\label{thm:tool}
\end{restatable}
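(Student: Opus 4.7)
The plan is to specialize Theorem~\ref{thm:badass} to a particularly simple one-parameter family of attacks, namely pure translations on the normed vector space $\mathcal X$. The theorem already delivers
\[
\err^*_{\varepsilon}(P^1,P^2) \;\ge\; \frac{1}{2}\bigl(1-\widetilde{\widetilde{\tv}}_{\varepsilon}(P^1,P^2)\bigr),
\]
so the only remaining work is to upper-bound $\widetilde{\widetilde{\tv}}_\varepsilon(P^1,P^2)$ by the translation-TV quantity appearing on the right-hand side of \eqref{eq:z}. This is achieved by making a judicious choice of a pair of type-$D_{\varepsilon/2}$ attackers inside the infimum defining $\widetilde{\widetilde{\tv}}_\varepsilon$.

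Concretely, I would fix any $z, z' \in \mathcal X$ with $\|z\|, \|z'\| \le \varepsilon/2$ and consider the translations $a_1^{z,z'}(x) := x - z$ and $a_2^{z,z'}(x) := x + z'$. These are obviously measurable, and $\|x - a_j^{z,z'}(x)\| \le \varepsilon/2$ by construction, so both are admissible type-$D_{\varepsilon/2}$ attackers in the sense of Section~\ref{subsec:adv_notations}. Their pushforwards are just translated measures, ${a_1^{z,z'}}_{\#} P^2 = P^2 - z$ and ${a_2^{z,z'}}_{\#} P^1 = P^1 + z'$.

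Next, I would invoke the translation-invariance and symmetry of total-variation on a normed vector space to rewrite
\[
\tv\bigl({a_1^{z,z'}}_{\#} P^2,\,{a_2^{z,z'}}_{\#} P^1\bigr) \;=\; \tv(P^2 - z,\, P^1 + z') \;=\; \tv\bigl(P^1,\, P^2 - (z+z')\bigr).
\]
As $(z,z')$ ranges over $\ball(0;\varepsilon/2) \times \ball(0;\varepsilon/2)$, the vector $w := -(z+z')$ covers the entire ball $\ball(0;\varepsilon)$ (any $w$ of norm at most $\varepsilon$ is attained by taking $z = z' = -w/2$). Infimizing over this sub-family of attacks therefore gives
\[
\widetilde{\widetilde{\tv}}_{\varepsilon}(P^1,P^2) \;\le\; \inf_{\|w\| \le \varepsilon} \tv(P^1, P^2 + w) \;\le\; \sup_{\|w\| \le \varepsilon} \tv(P^1, P^2 + w),
\]
and plugging this into Theorem~\ref{thm:badass} yields exactly \eqref{eq:z}.

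There is no real obstacle here: once Theorem~\ref{thm:badass} is in hand, the argument is a direct calculation. The only point worth stressing is the role of the factor $1/2$: splitting a single $\varepsilon$-translation into a product of two $\varepsilon/2$-translations is what allows the admissible perturbations $z + z'$ to fill the full $\varepsilon$-ball, and this split is precisely the \emph{midpoint-halving} baked into the MM condition~\ref{cond:mmp} exploited in the proof of Theorem~\ref{thm:badass}.
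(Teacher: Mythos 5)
Your proof is correct and follows essentially the same route as the paper: specialize the infimum defining $\widetilde{\widetilde{\tv}}_{\varepsilon}$ to pure translations $a_1(x)=x-z$, $a_2(x)=x+z'$, use translation-invariance and symmetry of $\tv$ to collapse the two half-budget shifts into a single shift ranging over the full $\varepsilon$-ball, and plug the resulting upper bound into Theorem~\ref{thm:badass}. Note that you (like the paper's own displayed calculation immediately preceding the corollary) actually establish the tighter bound $\err^*_\varepsilon(P^1,P^2) \ge \tfrac12\bigl(1 - \inf_{\|z\|\le\varepsilon}\tv(P^1,P^2+z)\bigr)$, of which \eqref{eq:z} with $\sup$ is a strictly weaker consequence; given the paper's follow-up remark that the optimizer $z^*$ is a universal adversarial perturbation, the $\sup$ in \eqref{eq:z} is almost certainly a typo for $\inf$.
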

A solution in $z^*$ to optimization problem
in the RHS of \eqref{eq:z} would be a (\emph{doubly}) universal adversarial perturbation: a
single fixed small vector which fools all classifiers on proportion of test
samples. Such a phenomenon has been reported in ~\cite{moosavi17}.

\subsection{Computing the optimal attack plan}
\label{subsec:matching}
It turns out that the optimal transport plan $\gamma_{\Omega}$ which realizes
the distance $\tv_{\Omega}(P^1,P^2)$ in Thm. \ref{thm:great} can be
efficiently computed via matching (graph theory), by using iid samples from both
distributions. The recent work ~\cite{yao2020} has studied a metric on
propbability measures which coincidentally corresponds to the metric
$\tv_\varepsilon(\cdot,\cdot)$ we defined in Eq. \ref{eq:perturbedvar}.
This metric even goes back to the authors of ~\cite{harel2015}, who proposed it
under the name of ``perturbed variation'', for the purposes of robust
statistical hypothesis testing.

The following proposition is an adaptation of ~\cite{harel2015}, and the proof is
similar and therefore omitted.
\begin{restatable}[Optimal universal attacks via maximal matching]{prop}{}
Suppose $P^1 = \sum_{i=1}^{n_1}\mu^1_i\delta_{x^1_i}$
and $P^2 = \sum_{j=1}^{n^2}\mu^2_j\delta_{x^2_j}$ are distributions with finite
supports $V^1 := \{x^1_1,\ldots,x^1_{n_1}\} \subseteq \mathcal X$, $V^2 =
\{x^2_1,\ldots,x^2_{n_2}\} \subseteq \mathcal X$, with weights
$(\mu^1_1,\ldots,\mu^1_{n_1}) \in \Delta_{n_1}$, $(\mu^2_1,\ldots,\mu^2_{n_2})
\in \Delta_{n_2}$. Let $G$ be the bipartite graph with vertices $V^1 \cup V^2$
and edges $(V^1 \times V^2) \cap \Omega$. Then we can compute
$\ot_{\Omega}(P^1,P^2)$ via the following linear program:
\begin{eqnarray}
\begin{split}
   &\ot_{\Omega}(P^1,P^2)=\min_{w^1,w^2,\gamma}
   \sum_{i=1}^{n_1}w^1_i + \sum_{j=1}^{n_2}w^2_j\\
  \mathrm{Subject}\;\mathrm{to}\;&w^1 \in \mathbb R^{n_1}_+, w^2 \in \mathbb
  R^{n_2}_+, \gamma \in \mathbb R^{n_1 \times n_2}_+\;
    \gamma_{i,j} = 0\;\forall (i,j) \not \in E\\
    &\sum_{x^2_j \sim x^1_i}\gamma_{i,j} + w^1_i = \mu^1_i\;
    \sum_{x^1_i \sim x^2_j}\gamma_{i,j} + w^2_j = \mu^2_j,\;\forall i \in [\![n_1]\!],\;j \in [\![n_2]\!].
\end{split}
\label{eq:combi}
\end{eqnarray}
Moreover, the optimal transport plan $\gamma_\Omega$ to the above LP, and the
can be computed in
$\mathcal O(k\min(n_1,n_2)\sqrt{\max(n_1,n_2)}$ time, where $k$ is the average
number of pairs edges in the graph $G$.
\label{thm:matching}
\end{restatable}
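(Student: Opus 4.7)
The plan is to specialize Lemma \ref{lm:partial} to the discrete setting, recognize the resulting problem as a maximum bipartite $b$-matching (equivalently, a max-flow on a bipartite network), and then invoke the Hopcroft-Karp-style algorithm used in \cite{harel2015}. By Lemma \ref{lm:partial}, $\tv_\Omega(P^1,P^2) = \inf_\gamma (1-\gamma(\mathcal{X}^2))$ over partial couplings $\gamma\in\Pi_{\le}(P^1,P^2)$ with $\supp(\gamma)\subseteq\Omega$. Because $P^1,P^2$ are purely atomic with supports $V^1,V^2$, any $\gamma$ with mass on a pair outside $V^1\times V^2$ can be discarded without decreasing the objective, so it suffices to parameterize $\gamma=\sum_{i,j}\gamma_{i,j}\delta_{(x^1_i,x^2_j)}$. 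The constraint $\supp(\gamma)\subseteq\Omega$ then exactly reads $\gamma_{i,j}=0$ whenever $(x^1_i,x^2_j)\notin E$, i.e. the edge set of the bipartite graph $G$.

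Next, I would introduce the slack variables $w^1_i := \mu^1_i - \sum_{j\,:\,(i,j)\in E}\gamma_{i,j}\ge 0$ and $w^2_j := \mu^2_j - \sum_{i\,:\,(i,j)\in E}\gamma_{i,j}\ge 0$. The partial-coupling domination conditions $\proj^1_{\#}\gamma\le P^1$ and $\proj^2_{\#}\gamma\le P^2$ are equivalent to the non-negativity of the $w^k$'s and to the equalities displayed in \eqref{eq:combi}. Summing these equalities over $i$ and then over $j$ gives $\sum_i w^1_i=\sum_k \mu^1_k-\gamma(\mathcal{X}^2)$ and $\sum_j w^2_j=\sum_k \mu^2_k-\gamma(\mathcal{X}^2)$; with the balanced-classification normalization $\sum_i\mu^1_i=\sum_j\mu^2_j$ these two quantities agree, so up to the normalizing constant the LP objective $\sum_i w^1_i+\sum_j w^2_j$ is an affine, strictly increasing function of $1-\gamma(\mathcal{X}^2)$. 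Minimizing one therefore minimizes the other, which establishes the identity in \eqref{eq:combi}.

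For the algorithmic claim, I would observe that the LP in \eqref{eq:combi} is a transportation LP with edge constraints $\gamma_{i,j}=0$ for $(i,j)\notin E$ and capacities $\mu^1_i,\mu^2_j$ at the source/sink arcs. This is precisely a maximum $b$-matching in the bipartite graph $G$, which by standard reduction to max-flow on a bipartite network of $n_1+n_2$ vertices and $k$ edges can be solved using the Hopcroft-Karp / Dinitz blocking-flow augmentation scheme. The classical analysis (see \cite{harel2015}, which handles the uniform-weights case) bounds the number of phases by $\mathcal{O}(\sqrt{\max(n_1,n_2)})$ and the per-phase cost by $\mathcal{O}(k\min(n_1,n_2))$, yielding the stated complexity; moreover, the primal solution recovered from the terminal flow is precisely the optimal plan $\gamma_\Omega$.

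The main obstacle is not the LP reduction itself, which is mechanical, but justifying that the algorithmic bound from \cite{harel2015} carries over verbatim to the present setting of arbitrary positive rational weights $\mu^k_i$ rather than the uniform empirical weights analysed there. I would address this by observing that, because the optimal LP has a basic feasible solution with at most $n_1+n_2-1$ nonzero $\gamma_{i,j}$, the transportation polytope is integral up to the common denominator of the $\mu^k_i$'s, and the same augmenting-path argument applies when augmentations push the minimum residual capacity along each path; the number of augmenting phases is still controlled by $\sqrt{\max(n_1,n_2)}$ by the Hopcroft-Karp argument applied to the induced unit-capacity bipartite structure. The remaining details mirror \cite{harel2015} and, as the statement notes, are therefore omitted.
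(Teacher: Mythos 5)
The paper itself omits a proof of this proposition, stating only that it ``is an adaptation of [Harel and Mannor, 2015], and the proof is similar and therefore omitted.'' Your proposal fills in that gap by exactly the route the authors indicate: discretize Lemma~\ref{lm:partial}, read off the partial-coupling LP, recognize it as a capacitated bipartite matching, and invoke Hopcroft--Karp. The first two steps are correct, and the specialization of Lemma~\ref{lm:partial} to atomic measures with the slack variables $w^1,w^2$ is indeed the whole content of the reduction. Two points deserve flagging.

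First, a normalization nit that you gloss over as ``up to the normalizing constant'': with $\mu^k\in\Delta_{n_k}$, summing the constraints gives $\sum_i w^1_i + \sum_j w^2_j = 2\bigl(1-\gamma(\mathcal{X}^2)\bigr)$, while Lemma~\ref{lm:partial} gives $\ot_\Omega = 1-\gamma^*(\mathcal{X}^2)$. So the LP value in \eqref{eq:combi} is $2\,\ot_\Omega(P^1,P^2)$, not $\ot_\Omega(P^1,P^2)$ as stated (note Algorithm~\ref{alg:pv} silently corrects for this by returning $\tfrac{1}{2}\bigl(\tfrac{u_1}{n_1}+\tfrac{u_2}{n_2}\bigr)$). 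Observing that the two objectives share the same minimizer is correct, but it does not ``establish the identity'' as written; you should either insert the factor $\tfrac{1}{2}$ or point out the missing factor explicitly.

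Second, and more substantively, your treatment of the complexity bound for non-uniform weights has a genuine gap, which you half-acknowledge. The Hopcroft--Karp $\mathcal{O}(\sqrt{V})$ phase bound is a property of \emph{unit-capacity} bipartite matching; after expanding to unit capacities via a common denominator $D$ of the $\mu^k_i$'s, the vertex count becomes $\Theta(D(n_1+n_2))$, so the phase bound becomes $\mathcal{O}(\sqrt{D\max(n_1,n_2)})$, not $\mathcal{O}(\sqrt{\max(n_1,n_2)})$. Asserting that ``the same augmenting-path argument applies'' to the $b$-matching polytope with arbitrary rational capacities is not a proof; for integer-capacitated bipartite flow one needs, e.g., capacity scaling or the Even--Tarjan/Goldberg--Rao analysis, which give different bounds. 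The stated running time is only justified for uniform atom weights $\mu^k_i = 1/n_k$ (the setting of both Harel--Mannor and Algorithm~\ref{alg:pv}); that restriction is not in the hypotheses of the proposition, and your proposal does not close this gap so much as note it and move on. Since the paper also states this bound without proof, this is a shared gap rather than a new error on your part, but it should be called out rather than waved through.
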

A simple algorithm for computing optimal matching $\gamma$ is given in
Alg. \ref{alg:pv} below. This algorithm is an adaptation of the algorithm in
~\cite{harel2015}, for computing their `perturbed variation'', a robust version of
total-variation which corresponds to our $\tv_{\varepsilon}(P^1,P^2)$ defined in
\eqref{eq:perturbedvar}. Also see  ~\cite{yao2020} and ~\cite{metric2010} for
related work.
\begin{algorithm}[H]
   \caption{Empirical approx. of $\tv_\Omega(P^1,P^2)$ and optimal attack
     plan $\gamma_\Omega$, for an attack model $\Omega$}
   \label{alg:pv}
\begin{algorithmic}
   \STATE {\bfseries Input:} $V^k = \{x_1^k,\ldots,x_{n_1}^k\}$, where
   $x_1^k,\ldots,x_{n_1}^k \sim P^k$ is a sample of size $n_k$ for $k \in \{1,2\}$.
   \STATE {\bfseries Construct} $G$ a bipartite graph  $G$ with vertex set $V^1 \cup V^2$ and
   edges  $(V^1 \times V^2) \cap \Omega$.
   \STATE {\bfseries Compute} maximal matching on $\gamma_\Omega$ on $G$.
   \STATE {\bfseries Return} $\gamma_\Omega$ and $\frac{1}{2}(\frac{u_1}{n_1} + \frac{u_2}{n_2})$,
   where $u_k$ is the number of unmatched vertices in $V^k$.
\end{algorithmic}
\end{algorithm}

\begin{restatable}[No Free lunch for the attacker]{rmk}{}
Unforturnately for the attacker, convergence of the above algorithm (or any
other algorithm) for computing $\tv_\Omega(P^1,P^2)$ from samples will
typically suffer from the curse of dimensionality. For example, in the case of
distance-based attacks on $\mathbb R^m$, this remark is a direct consequence of
~\cite[Theorems 3 and 4]{harel2015} where it is shown that the sample complexity
is exponential in the dimensionality $m$.
\end{restatable}

\section{Universal bounds for general distance-based attacks}
\label{sec:distbased}

We now turn to the special case distance-based attacks on a metricized feature
space $\mathcal X = (\mathcal X,d)$. We will exploit geometric properties of the
class-conditional distributions $P^k$ to obtain upper-bounds on
$\tv_\varepsilon(P^1,P^2)$, which will in turn imply lower lower bounds on
optimal error (thanks to Thm. \ref{thm:great}. 

\subsection{Bounds for light-tailed class-conditional distributions}
We now establish a series of upper-bounds on $\tv_\varepsilon(P^1,P^2)$, which
in turn provide hard lower bounds for the adversarial robustness error on any
classifier for the binary classification experiment $(P^1,P^2)$, namely
$\err^*_\varepsilon(P^1,P^2)$. These bounds are a consequence of ligh-tailed
class conditional distributions.

{{\bfseries Name of the game.}} We always have the upper-bound
$\err^*_\varepsilon(P^1,P^2) \le 1/2$ (attained by random guessing). Thus, the
real challenge is to show that $\err^*_\varepsilon(P^1,P^2) = 1/2 +
o_\varepsilon(1)$, 
where
$o_\varepsilon(1)$
goes to zero as the attack "budget" $\varepsilon$ is increased.

\begin{restatable}[Bounded tails]{df}{}
Let $\alpha:[0,\infty) \rightarrow [0, 1]$ be a function. We say the a
distribution $Q$ on $(\mathcal X,d)$ has $\alpha$-light tail about the point
$x_0 \in \mathcal X$ if $\mathbb P_{x \sim Q}(d(x,x_0) > t) \le
\alpha(t)\;\forall t \ge 0$.
\end{restatable}

\begin{restatable}[The curse of light-tailed class-conditional distributions]{thm}{lighttail}
  Suppose $P^1$ and $P^2$ have $\alpha$-light tails about a points $\mu_1 \in
  \mathcal X$ and $\mu_2 \in \mathcal X$ resp. Then
\begin{eqnarray}
\err^*_\varepsilon(P^1,P^2) \ge 1/2-\alpha\left((\varepsilon-d(\mu_1,\mu_2))/2\right),
\end{eqnarray}
holds for every $\varepsilon \ge d(\mu_1,\mu_2)$.
\label{thm:lighttail}
\end{restatable}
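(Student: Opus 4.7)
The plan is to combine two ingredients: the classifier-independent lower bound from Proposition \ref{thm:main} (equivalently Theorem \ref{thm:great}), which yields $\err^*_\varepsilon(P^1,P^2)\ge \tfrac{1}{2}(1-\tv_\varepsilon(P^1,P^2))$, with a direct coupling-based upper bound on $\tv_\varepsilon(P^1,P^2)$ coming from the light-tail hypothesis. So the whole problem reduces to showing
\begin{equation*}
\tv_{\varepsilon}(P^1,P^2) \;\le\; 2\,\alpha\!\left(\frac{\varepsilon - d(\mu_1,\mu_2)}{2}\right),
\end{equation*}
whenever $\varepsilon\ge d(\mu_1,\mu_2)$.

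To obtain this, set $r := (\varepsilon - d(\mu_1,\mu_2))/2 \ge 0$. By the characterization \eqref{eq:perturbedvar}, it suffices to exhibit a single coupling $(X_1,X_2)$ of $P^1$ and $P^2$ for which $\mathbb P(d(X_1,X_2)>\varepsilon)\le 2\alpha(r)$; I will simply take the product coupling $\gamma=P^1\otimes P^2$ (any coupling works, since only marginals are used below). Define the events $A_1 := \{d(X_1,\mu_1)\le r\}$ and $A_2 := \{d(X_2,\mu_2)\le r\}$. On $A_1\cap A_2$ the triangle inequality gives
\begin{equation*}
d(X_1,X_2) \;\le\; d(X_1,\mu_1) + d(\mu_1,\mu_2) + d(\mu_2,X_2) \;\le\; r + d(\mu_1,\mu_2) + r \;=\; \varepsilon,
\end{equation*}
so $\{d(X_1,X_2)>\varepsilon\}\subseteq A_1^c \cup A_2^c$. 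The $\alpha$-light-tail assumption bounds $\mathbb P(A_i^c)\le \alpha(r)$ for $i=1,2$, and a union bound then yields $\mathbb P(d(X_1,X_2)>\varepsilon)\le 2\alpha(r)$, as required.

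Putting the pieces together, Proposition \ref{thm:main} gives
\begin{equation*}
\err^*_\varepsilon(P^1,P^2) \;\ge\; \tfrac{1}{2}\bigl(1-\tv_\varepsilon(P^1,P^2)\bigr) \;\ge\; \tfrac{1}{2}\bigl(1-2\alpha(r)\bigr) \;=\; \tfrac{1}{2}-\alpha\!\left(\tfrac{\varepsilon - d(\mu_1,\mu_2)}{2}\right),
\end{equation*}
which is the claimed inequality. There is no real obstacle here: the light-tail hypothesis is tailor-made for a coupling-plus-triangle-inequality argument, and the condition $\varepsilon\ge d(\mu_1,\mu_2)$ is precisely what is needed so that $r\ge 0$ and $\alpha(r)$ is defined. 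The only thing worth double-checking is that one really has $\tv_\varepsilon \le 2\alpha(r)$ rather than the sharper $2\alpha(r)-\alpha(r)^2$ that the product coupling delivers; the proof is presented with the weaker (union-bound) form because that is what appears in the statement and it makes no difference in downstream applications.
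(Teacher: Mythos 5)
Your proof is correct and follows essentially the same route as the paper's: both apply the triangle inequality plus a union bound to a coupling of $P^1$ and $P^2$, invoke the $\alpha$-light-tail hypothesis to bound $\tv_\varepsilon(P^1,P^2)$ by $2\alpha\bigl((\varepsilon-d(\mu_1,\mu_2))/2\bigr)$, and then conclude via the lower bound $\err^*_\varepsilon \ge \tfrac{1}{2}(1-\tv_\varepsilon)$ from Proposition~\ref{thm:main}. The only cosmetic difference is that you phrase the triangle-inequality step via events $A_1,A_2$ and their complements, whereas the paper works directly with the chain of probabilistic inequalities; the mathematical content is identical.
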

\begin{proof}[Proof of Theorem \ref{thm:lighttail}]
Define $\widetilde{\varepsilon}:=(\varepsilon-d(\mu_1,\mu_2))/2$ and let $(X_1,X_2)$
be a any coupling of $P^1$ and $P^2$. By definition of
$\tv_\varepsilon(P^1,P^2)$, we have
\begin{eqnarray*}
\begin{split}
\tv_\varepsilon(P^1,P^2) &\le \mathbb P(d(X_1,X_2) > \varepsilon) \le \mathbb P(d(X_1,\mu_1) + d(X_2,\mu_2) > \varepsilon - d(\mu_1,\mu_2)) \\
&\le \mathbb P\left(d(X_1,\mu_1) > \widetilde{\varepsilon}\right) + \mathbb
P\left(d(X_2,\mu_2) > \widetilde{\varepsilon}\right) \le
\alpha(\widetilde{\varepsilon})+\alpha(\widetilde{\varepsilon})=2\alpha(\widetilde{\varepsilon}),
\end{split}
\end{eqnarray*}
where the 1st inequality is the triangle inequality and the 2nd is a union
bound. The result then follows by minimizing over the coupling $(X_1,X_2)$.
\end{proof}

As an example, take $\mathcal X=(\mathbb R^m,\|\cdot\|_\infty)$ and $P^k =
\mathcal N(\mu_k,\sigma^2 I_m)$. Then for every $t \ge 0$, one computes 
\begin{eqnarray*}
\begin{split} 
\mathbb P(d(x,\mu_1) > t) &= \mathbb P(\|N(0,\sigma^2 I_m)\|_\infty > t)
\le m\mathbb P(|\mathcal N(0,\sigma^2)| > t)\le 2me^{-t^2/(2\sigma^2)}.
\end{split}
\end{eqnarray*}
Thus, we can take $\alpha(t) = 2me^{-\frac{t^2}{2\sigma^2}}\;\forall t\ge 0$ and
obtain that the test error of any classifier under $\ell_\infty$-norm
adversarial attacks of size $\varepsilon$ is 
$\ge \err^*_\varepsilon(P^1,P^2) \ge (1-\tv_\varepsilon(P^1,P^2)) /2 \ge
1/2 - me^{(\varepsilon-\|\mu_1-\mu_2\|_\infty)^2/(4\sigma^2)} = 1/2 + o(1),
$
which increases to $1/2$, i.e the performance of random guessing, exponentially
fast as $\varepsilon$ is increased. This is just another manifestation of the
concentration of measure in high-dimensions (large $m$), for distributions which
are sufficiently ``curved''.

\subsection{Bounds under general moment and tail constraints on the
  data distribution}
The following condition will be central for the rest of the manuscript.

\begin{restatable}[Moment constraints]{cond}{}
 There exists $\alpha > 0$ and $M:\mathbb R_+ \rightarrow \mathbb R_+$ be an increasing
 convex function such that $M(0)=0$. We will
 occasionally assume that there exist $\mu_1,\mu_2 \in \mathcal X$ such that the following
 moment condition is satisfied
\begin{equation}
\begin{split}
\underset{x_1 \sim P^1}{\mathbb E}[M(d(x_1,\mu_1))] + \underset{x_2 \sim P^2}{\mathbb E}[M(d(x_2,\mu_2))] \le 2\alpha.
\end{split}
\label{eq:orlicz}
\end{equation}

\label{cond:orlize}
\end{restatable}

  For example, if each $P^k$ is $\sigma$-subGaussian about $\mu_k \in \mathbb
  R^m$, then we may take $M(r):=e^{r^2/\sigma^2}-1$ to satisfy the condition.
  More generally, recall that the Orlicz $M$-norm of a random variable $X_k \sim
  P^k$ (relative to the reference point $\mu_k$) is defined by
\begin{eqnarray}
\|X_k\|_M := \inf\{C > 0 \mid \mathbb E[M(d(X_k,\mu_k)/C)] \le 1\}.
\end{eqnarray}
Thus, Condition \ref{cond:orlize} is more general than
demanding that both $P^1$ and $P^2$ have Orlicz $M$-norm at most $\alpha$.
\begin{restatable}[The curse of bounded moments]{thm}{metricbound}
Suppose $(P^1,P^2)$ satisfies Condition \ref{cond:orlize}. Then
 \begin{eqnarray}
\err^*_\varepsilon(P^1,P^2) \ge 1/2(1- \alpha/M(\widetilde{\varepsilon})),\;\forall \varepsilon \ge 0.
 \end{eqnarray}
\label{thm:metricbound}
\end{restatable}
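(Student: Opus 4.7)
The plan is to reduce to an upper bound on $\tv_\varepsilon(P^1,P^2)$ via Proposition \ref{thm:main}, which already gives $\err^*_\varepsilon(P^1,P^2) \ge \tfrac{1}{2}(1-\tv_\varepsilon(P^1,P^2))$. So it suffices to show $\tv_\varepsilon(P^1,P^2) \le \alpha / M(\widetilde{\varepsilon})$ under Condition \ref{cond:orlize}, where $\widetilde{\varepsilon} = (\varepsilon-d(\mu_1,\mu_2))/2$ as in Theorem \ref{thm:lighttail}.

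First, I would fix an arbitrary coupling $(X_1,X_2) \in \Pi(P^1,P^2)$ and use the triangle inequality exactly as in the proof of Theorem \ref{thm:lighttail}: the event $\{d(X_1,X_2) > \varepsilon\}$ is contained in $\{d(X_1,\mu_1) + d(X_2,\mu_2) > \varepsilon - d(\mu_1,\mu_2)\}$, i.e.\ $\{\tfrac{1}{2}(d(X_1,\mu_1)+d(X_2,\mu_2)) > \widetilde{\varepsilon}\}$. The naive union bound followed by Markov's inequality applied separately to each summand would cost a factor of $2$, yielding only $2\alpha/M(\widetilde{\varepsilon})$; to recover the sharp bound $\alpha/M(\widetilde{\varepsilon})$ stated in the theorem, I would instead apply Markov's inequality to the half-sum after exploiting the convexity of $M$.

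Specifically, since $M$ is convex, increasing, with $M(0)=0$, Jensen's inequality gives
\begin{equation*}
M\!\left(\tfrac{1}{2}(d(X_1,\mu_1)+d(X_2,\mu_2))\right) \le \tfrac{1}{2}\bigl(M(d(X_1,\mu_1)) + M(d(X_2,\mu_2))\bigr).
\end{equation*}
Taking expectations and invoking Condition \ref{cond:orlize} yields $\mathbb{E}[M(\tfrac{1}{2}(d(X_1,\mu_1)+d(X_2,\mu_2)))] \le \alpha$. Since $M$ is non-decreasing, Markov's inequality then gives
\begin{equation*}
\mathbb{P}\!\left(\tfrac{1}{2}(d(X_1,\mu_1)+d(X_2,\mu_2)) > \widetilde{\varepsilon}\right) \le \frac{\alpha}{M(\widetilde{\varepsilon})},
\end{equation*}
and therefore $\mathbb{P}(d(X_1,X_2) > \varepsilon) \le \alpha/M(\widetilde{\varepsilon})$.

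Finally, taking the infimum over couplings and using definition \eqref{eq:perturbedvar} of $\tv_\varepsilon(P^1,P^2)$ yields $\tv_\varepsilon(P^1,P^2) \le \alpha/M(\widetilde{\varepsilon})$; combined with Proposition \ref{thm:main} this gives the claim. The only subtle step is the use of Jensen's inequality to avoid a spurious factor of $2$; the rest is triangle inequality plus a Markov bound tailored to $M$ (in spirit an Orlicz-type Markov inequality). If $M(\widetilde{\varepsilon}) \le \alpha$ the bound is vacuous, consistent with the trivial bound $\err^*_\varepsilon \ge 0$, and one should restrict to $\varepsilon > d(\mu_1,\mu_2)$ (so that $\widetilde{\varepsilon} > 0$ and $M(\widetilde{\varepsilon}) > 0$) to get nontrivial content.
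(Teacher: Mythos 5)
Your proof is correct and essentially reproduces the paper's argument: fix a coupling, use the triangle inequality to pass to the event $\{\tfrac{1}{2}(d(X_1,\mu_1)+d(X_2,\mu_2)) > \widetilde{\varepsilon}\}$, then exploit monotonicity and convexity of $M$ together with Markov's inequality and Condition \ref{cond:orlize}. The only cosmetic difference is the order of operations — the paper applies the pointwise Jensen inequality inside the probability event before invoking Markov, while you apply Markov to $M\bigl(\tfrac{1}{2}(d(X_1,\mu_1)+d(X_2,\mu_2))\bigr)$ first and then Jensen in expectation — yielding the identical bound.
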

\begin{proof}
  Define $\widetilde{\varepsilon}:=(\varepsilon-d(\mu_1,\mu_2))/2$.
  Let $(X_1,X_2)$ be a coupling of $P^1$ and $P^2$. Then, by the definition of
  $\tv_\varepsilon(P^1,P^2)$, we have
 \begin{eqnarray*}
 \begin{split}
 \tv_\varepsilon(P^1,P^2) &\le \mathbb P(d(X_1,X_2) > \varepsilon)\le \mathbb P\left(d(X_1,\mu_1) + d(X_2,\mu_2) > \varepsilon - d(\mu_1,\mu_2)\right)\\
&\le \mathbb P\left(M\left(\frac{d(X_1,\mu_1)}{2} + \frac{d(X_2,\mu_2))}{2}\right) > M(\widetilde{\varepsilon})\right)\\
&\le \mathbb P\left(\frac{M(d(X_1,\mu_1))+M(d(X_2,\mu_2))}{2} > M(\widetilde{\varepsilon})\right)\\
&\le \alpha/M(\widetilde{\varepsilon}),
 \end{split}
 \end{eqnarray*}
 where the 2nd inequality is the triangle inequality; the 3rd inequality is
 because $M$ is increasing; the 4th is because $M$ is convex; the 5th is Markov's inequality and the moment the assumption.
\end{proof}

A variety of corollaries to Thm. \ref{thm:metricbound}
can be obtained by considering different choices for the moment function $M$ and
the parameter $\alpha$. More are presented in the supplementary materials. For
example if $P^1$ and $P^2$ have $d(\cdot,x_0) \in L^p(P^1) \cap L^p(P^2)$ for
some (and therefore all) 
$x_0 \in \mathcal X$, we may take $M(r) := r^p$ and $\alpha
= W_{d,p}(P^1,P^2)^p$, where $W_{d,p}(P^1,P^2)$ is the order-$p$ Wasserstein
distance between $P^1$ and $P^2$, and obtain the following corollary, which was
also obtained independently in the recent paper ~\cite{pydi2019}. We have

\begin{restatable}[Lower-bound from Wasserstein distance]{cor}{wassbound}
Under the conditions in the previous paragraph, we have
\begin{eqnarray}
\err^*_\varepsilon(P^1,P^2) \ge 
  \frac{1}{2}\left(1-\left(\frac{W_{d,p}(P^1,P^2)}{\varepsilon}\right)^p\right).
\end{eqnarray}
\end{restatable}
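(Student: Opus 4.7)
The plan is to bypass the centering/triangle-inequality step that appears in the proof of Theorem \ref{thm:metricbound} and instead apply Markov's inequality directly inside the optimal coupling. This yields a cleaner bound expressed purely in terms of $W_{d,p}$, without the detour through reference points $\mu_1,\mu_2$.

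Concretely, I would proceed as follows. First, recall from \eqref{eq:perturbedvar} that
\[
\tv_\varepsilon(P^1,P^2) = \inf_{(X_1,X_2)} \mathbb{P}(d(X_1,X_2) > \varepsilon),
\]
where the infimum ranges over all couplings of $P^1$ and $P^2$. For any such coupling, since $t \mapsto t^p$ is increasing on $[0,\infty)$, the event $\{d(X_1,X_2)>\varepsilon\}$ coincides with $\{d(X_1,X_2)^p > \varepsilon^p\}$, so Markov's inequality gives
\[
\mathbb{P}(d(X_1,X_2) > \varepsilon) \le \frac{\mathbb{E}[d(X_1,X_2)^p]}{\varepsilon^p}.
\]
Taking the infimum over couplings on both sides and using the definition $W_{d,p}(P^1,P^2)^p = \inf_{(X_1,X_2)} \mathbb{E}[d(X_1,X_2)^p]$ (which is finite by the hypothesis $d(\cdot,x_0)\in L^p(P^1)\cap L^p(P^2)$ together with the triangle inequality), one obtains
\[
\tv_\varepsilon(P^1,P^2) \le \left(\frac{W_{d,p}(P^1,P^2)}{\varepsilon}\right)^p.
\]

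To conclude, plug this estimate into the universal lower bound $\err_\varepsilon^*(P^1,P^2) \ge \tfrac{1}{2}(1 - \tv_\varepsilon(P^1,P^2))$ furnished by Proposition \ref{thm:main} (specialized to distance-based attacks), giving exactly the claimed inequality.

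There is essentially no obstacle: the only mild point worth checking is that $W_{d,p}(P^1,P^2)<\infty$ under the stated integrability assumption, which follows because for any coupling one may bound $d(X_1,X_2)^p \lesssim_p d(X_1,x_0)^p + d(X_2,x_0)^p$ by the triangle inequality and $(a+b)^p \le 2^{p-1}(a^p+b^p)$, and both summands are integrable under their respective marginals. Everything else is a one-line application of Markov followed by the variational bound already proved.
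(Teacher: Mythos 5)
Your argument is correct, and it takes a genuinely different (and cleaner) route than the one the paper sketches. The paper derives this corollary by plugging $\mu_1=\mu_2=x_0$, $M(r)=(2r)^p$, $\alpha=W_{d,p}(P^1,P^2)^p$ into Theorem \ref{thm:metricbound}; the algebra $\alpha/M(\widetilde\varepsilon)=(W_{d,p}/\varepsilon)^p$ does produce the stated inequality, but this parameterization does not actually verify Condition \ref{cond:orlize}: that condition would require
\begin{equation*}
2^{p-1}\Bigl(\underset{x_1\sim P^1}{\mathbb E}\bigl[d(x_1,x_0)^p\bigr]+\underset{x_2\sim P^2}{\mathbb E}\bigl[d(x_2,x_0)^p\bigr]\Bigr)\le W_{d,p}(P^1,P^2)^p,
\end{equation*}
which fails already when $P^1=P^2=\delta_\mu$ with $\mu\ne x_0$ (left side positive, right side zero). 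Your proof sidesteps this entirely: you apply Markov's inequality to $d(X_1,X_2)^p$ directly on the coupling variable, \emph{then} take the infimum over couplings to recognize $W_{d,p}^p$, and finally invoke Proposition \ref{thm:main} (specialized via \eqref{eq:perturbedvar}) to pass from $\tv_\varepsilon$ to $\err_\varepsilon^*$. No reference points, no triangle-inequality split, no convexity of $M$ — the centering machinery of Theorem \ref{thm:metricbound} is simply unnecessary for this particular corollary. What the paper's route buys, when it applies, is generality across moment functions $M$ and the ability to trade in moments about fixed centers; what your route buys is that it is actually a complete, gap-free proof of the stated inequality. Your finiteness check for $W_{d,p}$ via $(a+b)^p\le 2^{p-1}(a^p+b^p)$ is also the right thing to say and correctly discharges the integrability hypothesis.
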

\begin{proof}
  Follows from Theorem \ref{thm:metricbound} with $\mu_1=\mu_2 = x_0 \in
  \mathcal X$ (any point!), $M(r) \equiv (2r)^p$, $\alpha
  = W_{d,p}(P^1,P^2)^p$.
\end{proof}

\section{Concluding remarks}
Our results extend the current theory on the limitations of adversarial
robustness in machine learning. Using techniques from optimal transport theory,
we have obtained explicitly variational formulae and lower-bounds on the
Bayes-optimal error classifiers can attain under adversarial attack. These
formulae suggest that instead of doing adversarial training on normal data,
practitioners should strive to do normal training on adversarially augmented
data. Going further, in the case of metric attacks, we have obtained explicit
bounds which exploit the high-dimensional geometry of the class-conditional
distribution of the data. These bounds are universal in that the are
classifier-independent; they only depend on the geometric properties of the
class-conditional distribution of the data (e.g finite moments, light-tailness,
etc.).




\bibliographystyle{apalike}
\bibliography{literature.bib}

\appendix

\section{Recap of main results}
For the convenience of the reader, let us begin by informally summarizing the
main contributions of our paper. Rigorus restatements (and proofs) of the
results will follow.

\section{Proofs of lemmas, propositions, theorems, and corollaries}
In this appendix we provide complete proofs for the theorems, corollaries,
etc. which were stated without proof in the manuscript. For clarity, each
result from the manuscript (theorems, corollaries, etc.) is restated in this
supplemental before proved.

\subsection{Proofs for results in section \ref{sec:variational}}


\main*
First note that the $\Omega$ defined is not automatically a closed subset of
$\mathcal X^2$. A sufficient condition is that the metric space $(\mathcal X,d)$
has the \emph{mid-point property}.

  Let us further suppose that
  \begin{itemize}
  \item $\Omega$ is symmetric, i.e $(x,x' ) \in \Omega$ iff
    $(x',x) \in \Omega$, and that
  \item $\Omega$ contains the diagonal of $\mathcal X^2$, i.e
    $(x,x) \in \Omega$  for all $x \in \mathcal X$.
  \end{itemize}
\begin{proof}[Proof of Proposition \ref{thm:main}]
For $x \in \mathcal X$, define $\Omega(x) := \{x' \in \mathcal X \mid (x,x') \in
\Omega\}$.
For a classifier $h$, consider the derived classifier $\widetilde{h}:\mathcal X
  \rightarrow \{1, 2, \perp\}$ defined by
  \begin{eqnarray}
    \widetilde{h}(x) :=
    \begin{cases}
      y,&\mbox{ if } h(x') = y\; \forall x' \in \Omega(x),\\ \perp,&\mbox{ else.}
    \end{cases}
  \end{eqnarray}
Here, the special symbol $\perp \not\in \{1, 2\}$ should be read as ``I don't
know!''. Let $X_1$ (resp. $X_2$) be a random variable that has the same
distribution as $X$ conditioned on the event $Y=1$ (resp. $Y=2$). One easily
computes $1-\err_{\Omega}(h; P^1,P^2) =
\frac{1}{2}\mathbb P(\widetilde{h}(X_1)=1)+\frac{1}{2}\mathbb P(\widetilde{h}(X_2)=2)$, from which
  \begin{eqnarray}
    2(1-\err_{\Omega}(h; P^1,P^2)) = \mathbb E[\mathbbm{1}[\widetilde{h}(X_1)=1]]+\mathbb
    E[\mathbbm{1}[\widetilde{h}(X_2)=2]].
    \label{eq:trick}
  \end{eqnarray}
  Now, define $g_0(x'):=\mathbbm{1}[\widetilde{h}(x')=1]$ and
  $f_0(x)=\mathbbm{1}[\widetilde{h}(x) \ne 2]=1-\mathbbm{1}[\widetilde{h}(x)=2]$.
  Then $f_0$ and $g_0$ are bounded, and $P^2$- (resp. $P^1$-) a.s continuous.
  Moreover, given $x', x \in \mathcal X$, if $c_{\Omega}(x',x)=1$, then
  $x' \not\in \Omega(x)$. Since
  $\{h^{-1}(\{y\})\}_{y=1, 2}$ is a partitioning of $\mathcal X$, at most one of
  $\Omega(x') \subseteq h^{-1}(\{1\})$ and $\Omega(x) \subseteq
  h^{-1}(\{2\})$ holds.
  Thus $\mathbbm{1}[\widetilde{h}(x')=1] + \mathbbm{1}[\widetilde{h}(x)=2] \le 1$, and so
  $$g_0(x')-f_0(x)=\mathbbm{1}[\widetilde{h}(x')=1] + \mathbbm{1}[\widetilde{h}(x')=2] - 1 \le
  c_{\Omega}(x',x),
  $$
  and so $(f_0, g_0)$ is a pair of Kantorovich potentials for the
  cost function $c_{\Omega}$.
  Consequently, from the Kantorovich-Rubinstein duality
  formula, we have
  \begin{eqnarray*}
    \begin{split}
    \text{OT}_{\Omega}(P^2,P^1)& = \sup_{\text{potentials } f, g}\mathbb
    E[g(X_1)]-\mathbb E[f(X_2)]
    \ge \mathbb
    E[g_0(X_1)]-\mathbb E[f_0(X_2)]\\
    &= \mathbb
    E[\mathbbm{1}[\widetilde{h}(X_1)=1]]+\mathbb E[\mathbbm{1}[\widetilde{h}(X_2)=2]]2
    \\
    &\overset{\eqref{eq:trick}}{=}
    2(1-\err_{\Omega}(h; P^1,P^2))
    =1-2\err_{\Omega}(h; P^1,P^2).
    \end{split}
  \end{eqnarray*}
 Since $h$ is an arbitrary classifier, we obtain that
 $2\;\underset{h}{\inf}\;\err_{\Omega}(h; P^1,P^2) \ge 1-\text{OT}_{\Omega}(P^2,P^1)$ as claimed.
\end{proof}

\strassen*
\begin{proof}
One computes
\begin{eqnarray}
\begin{split}
\ot_{\Omega}(P^1,P^2) &:= \inf_{\gamma \in \Pi(P^1,P^2)}\int_{\mathcal X^2}
c_{\Omega}(x,x')\diff\gamma(x,x') = \inf_{\gamma \in
  \Pi(P^1,P^2)}\int_{\mathcal X^2} \mathbbm{1}[(x,x') \in \Omega]
\diff\gamma(x,x')\\ &=\inf_{\gamma \in \Pi(P^1,P^2)}\int_{\Omega}
\diff\gamma(x,x')= \inf_{\gamma \in \Pi(P^1,P^2)}\gamma(\Omega),
\end{split}
\label{eq:prestrassen}
\end{eqnarray}
On the other hand, by \emph{Strassen's Marriage Theorem}  (see ~\cite[Theorem
1.27 of]{villaniTopics}) and the definition of $\overline{U}_\Omega$ in Eq.
\eqref{eq:Uneb}, one has
$$
\inf_{\gamma \in \Pi(P^1,P^2)}\gamma(\Omega)=\underset{ U \subseteq \mathcal
  X\;\text{closed}}{\sup}\; P^2(U)-P^1(\overline{U}_{\Omega}),
$$
and the result follows. The particular case of distance-based attacks
corresponds to letting $\Omega := D_\varepsilon := \{(x,x') \in \mathcal X^2
\mid d(x,x') \le \varepsilon\}$, so that $\overline{U}_\Omega = U^\varepsilon$,
the $\varepsilon$-neighborhood of $U$.
\end{proof}


\badass*

\begin{proof}
Note that each $P^k$ is a Borel measure on $\mathcal X$ which integrates
to $1/2$. For the convenience of the proof, we rescale each $P^k$ by $2$, so
that it integrates to $1$.

Let $D'_\varepsilon := \mathcal X^2 \setminus D_\varepsilon$. To prove the
theorem, we consider the following intermediate quantity
$$
E(P^1,P^2):=\inf_{\gamma\in \Pi_{\le}(P^1,P^2),\; \supp(\gamma)\subseteq
  D_\varepsilon} 1 - \gamma(\mathcal{X}^2).
$$
Applying Lemma \ref{lm:partial} with $\Omega = D_\varepsilon$, we know that
$\ot_{\varepsilon} = E$. The rest of the proof is divided into separate steps.

\paragraph{Step 1: proving the equality $E=\widetilde{\tv}_{\varepsilon}$.} Let $\gamma$
be feasible for $E$. Because $(\mathcal X,d)$\ satisfies the MM property (Condition
\ref{cond:mmp}), there exists a $\gamma$-measurable map $\eta:\mathcal X^2 \to
\mathcal X$ such that $\eta(x,x')$ is a midpoint of $x$ and $x'$ for all $x,x' \in
\mathcal X$. Now, consider the $\gamma$-measurable maps $T_1,T_2:\mathcal X^2
\rightarrow \mathcal X^2$, $D:\mathcal X \rightarrow \mathcal X^2$ defined by
\begin{eqnarray}
\begin{split}
T_1(x,x') := (x,\eta(x,x')),\;
T_2(x,x') := (\eta(x,x'),x'),\;T_3(x)=(x,x).
\end{split}
\end{eqnarray}
Construct couplings $\gamma_1 = (T_1)_\#
\gamma + {T_3}_\# (P^1-\proj^1_\#\gamma)$ and $\gamma_2 = (T_2)_\# \gamma +
{T_3}_\# (P^2-\proj^2_\#\gamma)$. Then $(\gamma_1,\gamma_2)$ is feasible for
$\widetilde{\tv}_\epsilon$ and
\begin{eqnarray*}
\begin{split}
\tv(\proj^2_\#\gamma_1,\proj^1_\#\gamma_2)
\leq (P^1-\proj^1_\#\gamma)(\mathcal{X})+ (P^2-\proj^2_\#\gamma)(\mathcal{X})
= 2(1-\gamma(\mathcal{X}^2))
\end{split}
\end{eqnarray*}
because the second marginal of $(T_1)_\# \gamma$ and the first  marginal of
$(T_2)_\# \gamma$ agree by construction. Thus $\tv_{\varepsilon}\leq E$.
Conversely, let $\gamma_1,\gamma_2$ be feasible for $\widetilde{\tv}_{\varepsilon}$,
and let $\tilde \gamma_1\leq \gamma_1$ and $\tilde \gamma_2\leq \gamma_2$ be
such that $\proj^2_\#\tilde \gamma_1 = \proj^1_\#\gamma_2 = (\proj^2_\#
\gamma_1)\wedge (\proj^1_\# \gamma_2)$ where $\wedge$ is the "pointwise" minimum
of two measures (they can be built with the disintegration theorem). Now build
$\tilde \gamma$ feasible for $E$ by gluing together $\tilde \gamma_1$ and
$\tilde \gamma_2$. It holds
$$
\tv(\proj^2_\#\gamma_1 ,\proj^1_\# \gamma_2 ) = 2(1-\proj^2_\#\gamma_1  \wedge
\proj^1_\# \gamma_2)(\mathcal{X}) = 2(1-\tilde \gamma(\mathcal{X}^2)).
$$
Thus $E\leq \widetilde{\tv}_{\varepsilon}$ hence $E=\widetilde{\tv}_{\varepsilon}$.

\paragraph{Step 2: proving the inequality $\widetilde{\tv}_{\varepsilon}\leq
  \widetilde{\widetilde{\tv}}_{\varepsilon}$.} Now, the fact that $\widetilde{\tv}_{\varepsilon}\leq
\widetilde{\widetilde{\tv}}_{\varepsilon}$ in general is due to the fact to any
transport map $a$ satisfying $d(a(x),x) \leq \varepsilon$ corresponds a
deterministic transport plan $(\mathrm{id}, a)_\# P^1$ supported on
$D_\varepsilon$. In general, equality in the theorem will fail to hold. For
example, on the real line, $P^1=\frac13 \delta_{-\varepsilon}+\frac13
\delta_{0}+\frac13 \delta_{\varepsilon}$ and $P^2=\frac12
\delta_{-\varepsilon}+\frac12 \delta_\varepsilon$ has
$\widetilde{\widetilde{\tv}}_{\varepsilon}(P^1,P^2)=2/6$ and
$\widetilde{\tv}_{\varepsilon}(P^1,P^2)=0$.

\paragraph{Step 3: proving the inequality
  $\widetilde{\tv}_{\varepsilon} \ge \widetilde{\widetilde{\tv}}_{\varepsilon}$ for
  absolutely continuous $P^k$'s.} Finally, the fact that
$\widetilde{\tv}_{\varepsilon} \ge \widetilde{\widetilde{\tv}}_{\varepsilon}$ when
$P^1$ and $P^2$ are absolutely continuous is a consequence of the existence of
an optimal transport map for the $W_\infty$ distance. 
Indeed, if $(\gamma_1,\gamma_2)$ is feasible for
$\widetilde{\tv}_{\varepsilon}$, then $W_\infty(P^1,\proj^2_\#\gamma_1)\leq
\varepsilon$ and there exists a measurable map $a_1:\mathcal{X}\to\mathcal{X}$
such that $d(a_1,x)\leq \varepsilon$ $P^1$-a.e. and $(a_1)_\# P^1 =
\proj^2_\#\gamma_1$ (one can build $a_2$ similarly).
\end{proof}

\begin{proof}
The first part of the claim follows from a direct application of ~\cite[Theorem 1]{barsov87}:
  \begin{eqnarray*}
  \begin{split}
\tv(\mathcal N(\mu_1,\Sigma),\mathcal N(\mu_2,\Sigma))=
  2\Phi(\|\mu\|_{\Sigma^{2}}/2)2,   
  \end{split}
  \end{eqnarray*}
  where $\mu:=\mu_1-\mu_1 \in \mathbb R^d$.
Thus  $\err_{\Omega}^* \ge 1- \Phi(\Delta(\varepsilon)/2)$, where
$\Delta(\varepsilon):=\min_{\|z\|_{\mathcal X} \le
  \varepsilon}\|z-\mu\|_{\Sigma^{2}}$. It now remains to bound
$\Phi(\Delta(\varepsilon))$, and we are led to consider the computation of
quantities of the following form.

\paragraph{Bounding the quantity $\Delta(\varepsilon)$.}
We are led to consider problems of the form
\begin{eqnarray}
\alpha &:=\max_{\|w\|_\Sigma \le 1} w^Ta-\varepsilon\|w\|_1,
\end{eqnarray}
where $a \in \mathbb R^d$ and $\Sigma$ be a positive definite matrix of size $d$.
Of course, the solution value might not be analytically expressible in general, but there is some hope, when the matrix $\Sigma$ is diagonal. That notwithstanding, using the dual representation of the $\ell_1$-norm, one has
\begin{eqnarray}
\begin{split}
\alpha &= \max_{\|w\|_\Sigma \le 1}\min_{\|z\|_\infty \le \varepsilon}w^T a-w^Tz
=\max_{\|z\|_\infty \le \varepsilon}\min_{\|w\|_\Sigma \le 1}w^T(z-a)\\
&=\min_{\|z\|_\infty \le \varepsilon}\left(\max_{\|w\|_\Sigma \le 1}w^T(z-a)\right)
=\min_{\|z\|_\infty \le \varepsilon}\left(\max_{\|\tilde{w}\|_2 \le 1}\tilde{w}^T\Sigma^{-1}(z-a)\right)\\
&=\min_{\|z\|_\infty \le \varepsilon}\|z-a\|_{\Sigma^{-1}}
=\min_{\|z\|_\infty \le \varepsilon}\|z-a\|_{\Sigma^{-1}},
\end{split}
\label{eq:alpha}
\end{eqnarray}
where we have used \emph{Sion's minimax theorem} to interchange min and max in the first line, and we have introduced the auxiliary variable $\tilde{w}:=\Sigma^{-1/2}w$ in the fourth line. We note that given a value for the dual variable $z$, the optimal value of the primal variable $w$ is
\begin{eqnarray}
  w \propto \frac{\Sigma^{-1}(a-z)}{\|\Sigma^{-1}(a-z)\|_2}.
  \label{eq:wopt}
\end{eqnarray}
The above expression ~\eqref{eq:alpha} for the optimal objective value $\alpha$ is unlikely to be computable analytically in general, due to the non-separability of the objective (even though the constraint is perfectly separable as a product of 1D constraints).
In any case, it follows from the above display that $\alpha \le 0$, with equality iff $\|a\|_\infty \le \varepsilon$.
\qed

\paragraph{Exact formula for diagonal $\Sigma$.}
In the special case where $\Sigma=\text{diag}(\sigma_1,\ldots,\sigma_2)$, the square of the optimal objective value $\alpha^2$ can be separated as
\begin{eqnarray*}
\begin{split}
\alpha \ge 0,\;\alpha^2 = \sum_{j=1}^d\min_{|z_j| \le \varepsilon}\sigma_j^{-2}(z_j-a_j)^2
&= \sum_{j=1}^d\sigma_j^{-2}\begin{cases}(a_j+\varepsilon)^2,&\mbox{ if }a_j \le -\varepsilon,\\ 0,&\mbox{ if }-\varepsilon < a_j \le \varepsilon,\\ (a_j-\varepsilon)^2,&\mbox{ if }a_j > \varepsilon,\end{cases}\\
&=\sum_{j=1}^d\sigma_j^{-2}((|a_j|-\varepsilon)_+)^2.
\end{split}
\end{eqnarray*}
Thus $\alpha = \sqrt{\sum_{j=1}^d\sigma_j^{-2}((|a_j|-\varepsilon)_+)^2}$.
By the way, the optimium is attained at
\begin{eqnarray}
  \begin{split}
  z_j &= \begin{cases}
    -\varepsilon,&\mbox{ if }a_j \le -\varepsilon,\\ a_j,&\mbox{ if }-\varepsilon  < a_j \le \varepsilon,\\
    \varepsilon,&\mbox{ if }a_j > \varepsilon,    
\end{cases}\\
&=a_j-\sign(a_j)(|a_j|-\varepsilon)_+
  \end{split}
\end{eqnarray}
Plugging this into \eqref{eq:wopt} yields the optimal weights
\begin{eqnarray}
w_j \propto \sigma^{-2}_j\sign(a_j)(|a_j|-\varepsilon)_+.
\end{eqnarray}

\paragraph{Upper / lower bounds for general $\Sigma$.}
Let $\sigma_1,\sigma_2,\ldots,\sigma_d > 0$ be the eigenvalues of $\Sigma$. Then 
$$
\|z-a\|_{\Sigma^{-1}}^2 := (z-a)^T\Sigma^{-1}(w-a) \le \sum_{j=1}^d(z_j-a_j)^2/\sigma_j^2 =: \|z-a\|_{\diag(1/\sigma_1,\ldots,1/\sigma_d)}^2.
$$
Therefore in view of the previous computations for diagonal covariance matrices, one has the bound $\alpha \le \sqrt{\sum_{j=1}^d\sigma_j^{-2}((|a_j|-\varepsilon_j)_+)^2}$.

\end{proof}

\section{Miscellaneous}
\subsection{Computing optimal adversarial attack plan via bipartite graph matching (section \ref{subsec:matching}}

\begin{figure}[!hb]
    \centering
    \includegraphics[width=.6\linewidth]{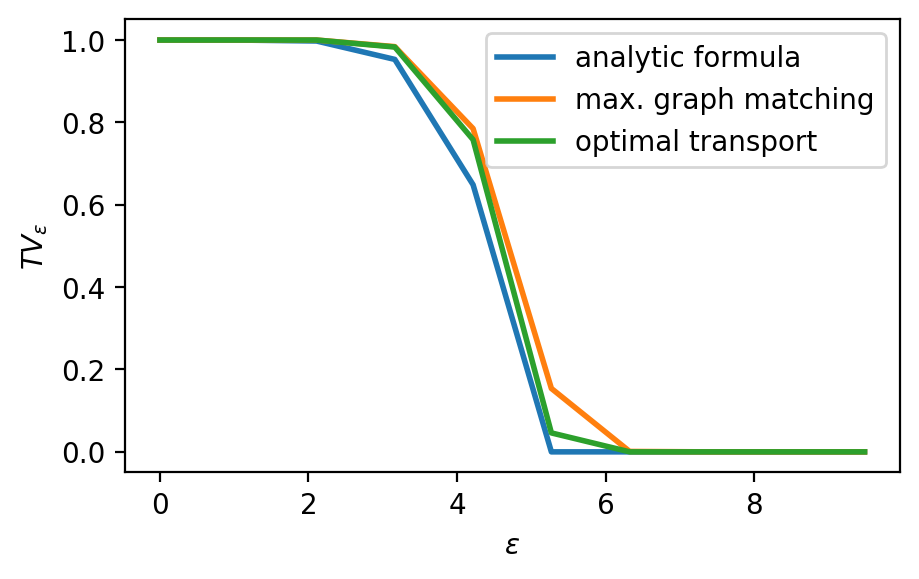}
    \label{fig:tveps_computation}
    \vspace{-.5cm}
    \caption{{\bfseries Left:} Numerical computation of
      $\tv_\varepsilon(P^1,P^2)$ for two 10-dimensional Gaussians $P^1$ and
      $P^2$ of same covariance matrix $\sigma I_{10}$ but different means. The
      maximal graph matching approach is described in Alg. \ref{alg:pv}, run on
      empirical samples from 500 iid sampes from $P^1$ and $P^2$.
    }
  \end{figure}

\end{document}